\newtheorem{lemma}{Lemma}
\newtheorem{corollary}{Corollary}
\renewcommand{\phi}{\varphi}
\renewcommand{\P}{\mathbb{P}}
\newcommand{\E}{\mathbb{E}}
\newcommand{\R}{\mathbb{R}}
\newcommand{\cE}{\mathcal{E}}
\def\ds1{\mathds{1}}
\renewcommand{\epsilon}{\varepsilon}
\newcommand{\eps}{\varepsilon}
\newcommand{\argmax}{\mathop{\mathrm{argmax}}}
\newlength{\minipagewidth}
\newcommand{\beq}{\begin{equation}}
\newcommand{\eeq}{\end{equation}}
\newcommand{\beqa}{\begin{eqnarray}}
\newcommand{\eeqa}{\end{eqnarray}}
\newcommand{\beqan}{\begin{eqnarray*}}
\newcommand{\eeqan}{\end{eqnarray*}}
\def\ba#1\ea{\begin{align*}#1\end{align*}} 
\def\banum#1\eanum{\begin{align}#1\end{align}} 
\newcommand{\hatmui}{\widehat{\mu}_{i,T_i(t)}}
\newcommand{\Hn}{\mathbf{H}}
\newtheorem{theorem}{Theorem}
\newcommand{\BlackBox}{\rule{1.5ex}{1.5ex}}  
\newenvironment{proof}{\par\noindent{\bf Proof\ }}{\hfill\BlackBox\\[2mm]}
\begin{document}

\title{lil' UCB : An Optimal Exploration Algorithm for Multi-Armed Bandits}
\date{}
\author{Kevin Jamieson$^\dagger$, Matthew Malloy$^\dagger$, Robert Nowak$^\dagger$, and S\'{e}bastien Bubeck$^{\ddagger}$  \\
$^\dagger$Department of Electrical and Computer Engineering, \\  University of Wisconsin-Madison \\
$^\ddagger$Department of Operations Research and Financial Engineering, \\
Princeton University  }	
\maketitle
		
\abstract{The paper proposes a novel upper confidence bound (UCB) procedure for identifying the arm with the largest mean in a multi-armed bandit game in the fixed confidence setting using a small number of total samples.  The procedure cannot be improved in the sense that the number of samples required to identify the best arm is within a constant factor of a lower bound based on the law of the iterated logarithm (LIL). Inspired by the LIL, we construct our confidence bounds to explicitly account for the infinite time horizon of the algorithm. 
In addition, by using a novel stopping time for the algorithm we avoid a union bound over the arms that has been observed in other UCB-type algorithms. We prove that the algorithm is optimal up to constants and also show through simulations that it provides superior performance with respect to the state-of-the-art. }
		
\section{Introduction}

This paper introduces a new algorithm for the \emph{best arm} problem in the stochastic multi-armed bandit (MAB) setting. Consider a MAB  with $n$ arms, each with unknown mean payoff $\mu_1, \dots, \mu_n$ in $[0,1]$. A sample of the $i$th arm is an independent realization of a sub-Gaussian random variable with mean $\mu_i$. In the \emph{fixed confidence setting}, the goal of the best arm problem is to devise a sampling procedure with a single input $\delta$ that, regardless of the values of $\mu_1,\dots, \mu_n$, finds the arm with the largest mean with probability at least $1-\delta$. More precisely, best arm procedures must satisfy $\sup_{\mu_1,\dots,\mu_n} \P(\widehat{i} \neq i^*) \leq \delta$, where $i^*$ is the best arm, $\widehat{i}$ an estimate of the best arm, and the supremum is taken over all set of means such that there exists a unique best arm.   In this sense, best arm procedures must automatically adjust sampling to ensure success when the mean of the best and second best arms are arbitrarily close. 
Contrast this with the {\em fixed budget setting} where the total number of samples remains a constant and the confidence in which the best arm is identified within the given budget varies with the setting of the means. While the fixed budget and fixed confidence settings are related (see \cite{gabillon2012best} for a discussion) this paper focuses on the fixed confidence setting only. 


The best arm problem has a long history dating back to the '50s with the work of \cite{paulson1964sequential, bechhofer1958sequential}.  In the fixed confidence setting, the last decade has seen a flurry of activity providing new upper and lower bounds.  In 2002, the {\em successive elimination} procedure of \cite{even2002pac} was shown to find the best arm with order $\sum_{i\neq i^*} \Delta_i^{-2} \log( n \Delta_i^{-2})$ samples, where $\Delta_i = \mu_{i^*} - \mu_i$, coming within a logarithmic factor of the lower bound of  $\sum_{i\neq i^*} \Delta_i^{-2}$, shown in 2004 in \cite{mannor2004sample}.  A similar bound was also obtained using a procedure known as {\em LUCB1} that was originally designed for finding the $m$-best arms \cite{kalyanakrishnan2012pac}. Recently, \cite{jamieson2013finding} proposed a procedure called {\em PRISM} which succeeds with  $\sum_i \Delta_i^{-2} \log \log \left( \sum_j \Delta_j^{-2}\right) $ or $\sum_i \Delta_i^{-2}  \log \left(  \Delta_i^{-2}\right) $ samples depending on the parameterization of the algorithm, improving the result of \cite{even2002pac} by at least a factor of $\log(n)$.  The best sample complexity result for the fixed confidence setting comes from a procedure similar to PRISM, called {\em exponential-gap elimination} \cite{karnin2013almost}, which guarantees identification of the best arm with high probability using order $\sum_i \Delta_i^{-2} \log \log \Delta_i^{-2}$ samples, coming within a doubly logarithmic factor of the lower bound of \cite{mannor2004sample}. While the authors of \cite{karnin2013almost} conjecture that the $\log \log$ term cannot be avoided, it remained unclear as to whether the upper bound of \cite{karnin2013almost} or the lower bound of \cite{mannor2004sample} was loose. 

The classic work of \cite{Farrell} answers this question.  It shows that the doubly logarithmic factor is necessary, implying that order  $\sum_i \Delta_i^{-2} \log \log \Delta_i^{-2}$ samples are necessary and sufficient in the sense that no procedure can satisfy $\sup_{\Delta_1,\dots, \Delta_n} \P(\widehat{i}\neq  i^* )\leq \delta$ and use fewer than $\sum_i \Delta_i^{-2} \log \log \Delta_i^{-2}$ samples in expectation for all $\Delta_1,\dots, \Delta_n$. 
The doubly logarithmic factor is a consequence of the law of the iterated logarithm (LIL) \cite{darling1985iterated}. The LIL states that if $X_{\ell}$ are i.i.d.\ sub-Gaussian random variables with $\E[X_\ell] = 0$, $\E[X_\ell^2]=\sigma^2$ and we define $S_t = \sum_{\ell=1}^t X_\ell$ then 
\begin{align} \nonumber
\limsup_{t \rightarrow \infty} \frac{S_t}{\sqrt{2 \sigma^2 t \log\log(t)}} =1   \mbox{\ \ and \ \ }
\liminf_{t \rightarrow \infty} \frac{S_t}{\sqrt{2 \sigma^2 t \log\log(t)}} =-1  
\end{align}
almost surely. Here is the basic intuition behind the lower bound.  Consider the two-arm problem and let $\Delta$
be the difference between the means.  In this case, it is reasonable to sample both arms equally and consider the sum of differences of the samples, which is a random walk with drift $\Delta$.  The deterministic drift crosses the LIL bound (for a zero-mean walk) when $t\, \Delta = \sqrt{2t\log\log t}$.  Solving this equation for $t$ yields $t \approx 2\Delta^{-2} \log \log \Delta^{-2}$.  This intuition will be formalized in the next section.

The LIL also motivates a novel approach to the best arm problem. Specifically, the LIL suggests a natural scaling for confidence bounds on empirical means, and we follow this intuition to develop a new algorithm for the best-arm problem.  The algorithm is an Upper Confidence Bound (UCB) procedure \cite{auer2002finite} based on a finite sample version of the LIL.
The new algorithm, called lil'UCB, is described in Figure~\ref{fig:lilucb}. By explicitly accounting for the $\log \log$ factor in the confidence bound and using a novel stopping criterion, our analysis of lil'UCB avoids taking naive union bounds over time, as encountered in some UCB algorithms \cite{kalyanakrishnan2012pac,audibert2010best}, as well as the wasteful ``doubling trick''  often employed in algorithms that proceed in epochs, such as the PRISM and exponential-gap elimination procedures \cite{even2002pac,karnin2013almost,jamieson2013finding}. Also, in some analyses of best arm algorithms the upper confidence bounds of each arm are designed to hold with high probability for all arms uniformly, incurring a $\log(n)$ term in the confidence bound as a result of the necessary union bound over the $n$ arms \cite{even2002pac,kalyanakrishnan2012pac,audibert2010best}. However, our stopping time allows for a tighter analysis so that arms with larger gaps are allowed larger confidence bounds than those arms with smaller gaps where higher confidence is required.   Like exponential-gap elimination, lil'UCB is order optimal in terms of sample complexity. 

One of the main motivations for this work was to develop an algorithm that exhibits great practical performance in addition to optimal sample complexity.  While the sample complexity of exponential-gap elimination 
is optimal up to constants, and PRISM up to small $\log \log$ factors, the empirical performance of these methods is rather disappointing, even when compared to non-sequential sampling.  Both PRISM and exponential-gap elimination employ {\em median elimination} \cite{even2002pac} as a subroutine.  Median elimination is used to find an arm that is within $\epsilon>0$ of the largest, and has sample complexity within a constant factor of optimal for this subproblem.  However, the constant factors tend to be quite large, and repeated applications of median elimination within PRISM and exponential-gap elimination are extremely wasteful.  
On the contrary, lil'UCB does not invoke wasteful subroutines.  As we will show, in addition to having the best theoretical sample complexities bounds known to date, lil'UCB exhibits superior performance in practice with respect to state-of-the-art algorithms.

\section{Lower Bound}

Before introducing the lil'UCB algorithm, we show that the $\log\log$ factor  in the sample complexity is necessary for best-arm identification. It suffices to consider a two armed bandit problem with a gap $\Delta$.  If a lower bound on the gap is unknown, then the $\log\log$ factor is necessary, as shown by the following result of \cite{Farrell}.

\begin{corollary} \label{thm:lim}
Consider the best arm problem in the fixed confidence setting with $n=2$ and expected number of samples $\E_{\Delta}[T]$.  Any procedure with $\sup_{\Delta \neq 0} \; \mathbb{P}(\widehat{i} \neq i^* ) \leq \delta$, $\delta \in (0,1/2)$, necessarily has 
\begin{eqnarray} \label{eqn:thm:2}   \nonumber
\limsup_{{\Delta} \rightarrow 0} \frac{\E_{\Delta} [T]}{\Delta^{-2} \log \log {\Delta^{-2}} } &\geq& 2-4 \delta. 
\end{eqnarray}
\end{corollary}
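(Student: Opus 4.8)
The plan is to reduce the $n=2$ bandit to a one-sided sequential test between two Gaussian location shifts and then read off the $\log\log$ growth from the law of the iterated logarithm; this is exactly the content of \cite{Farrell}, so one clean route is a careful reduction followed by a citation, and I will also sketch the self-contained argument.

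\emph{Reduction.} Since any sub-Gaussian family is allowed, take both arms Gaussian with a common variance $\sigma^2$ (a shift makes the means land in $[0,1]$ and is asymptotically harmless). For a gap $\Delta>0$, let $\nu^{+}_\Delta$ have arm $1\sim\mathcal N(\mu_0+\Delta/2,\sigma^2)$ and arm $2\sim\mathcal N(\mu_0-\Delta/2,\sigma^2)$, let $\nu^{-}_\Delta$ be the instance with the two arms swapped, and let $\nu_0$ be the symmetric reference with both arms $\mathcal N(\mu_0,\sigma^2)$. The hypothesis $\sup_{\Delta\neq0}\P(\wh i\neq i^*)\le\delta$ says exactly that the sampling rule, the stopping time $T$, and the terminal decision $\wh i$ form \emph{one} sequential test that is simultaneously $(1-\delta)$-correct at \emph{every} gap --- it is this ``for all $\Delta$'' clause that forces the $\log\log$. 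Writing $\mathcal F_t$ for the $\sigma$-algebra of the first $t$ pulls and rewards and $W_t$ for the difference of the centred running reward-sums of the two arms (with $t$ the total number of pulls), a one-line computation gives $\log\tfrac{d\nu^{+}_\Delta}{d\nu_0}\big|_{\mathcal F_t}=\tfrac{\Delta}{2\sigma^2}W_t-\tfrac{\Delta^2}{8\sigma^2}t$ and $\log\tfrac{d\nu^{+}_\Delta}{d\nu^{-}_\Delta}\big|_{\mathcal F_t}=\tfrac{\Delta}{\sigma^2}W_t$. The crucial structural fact is that under $\nu_0$ the increments of $(W_t)$ are conditionally $\mathcal N(0,\sigma^2)$, so $(W_t)$ is an ordinary Gaussian random walk and the LIL of the introduction applies verbatim, $\limsup_t W_t/\sqrt{2\sigma^2 t\log\log t}=1$ $\nu_0$-a.s.; under $\nu^{\pm}_\Delta$ the same walk acquires drift $\pm\Delta/2$ per pull.

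\emph{From the error bound to a lower bound on $T$.} Fix $\eps>0$ and let $t_0=t_0(\Delta)=2(1-\eps)\,\Delta^{-2}\log\log\Delta^{-2}$, the crossing time of $t\Delta=\sqrt{2t\log\log t}$ from Section~2 (in the normalisation matching the statement). Decompose $\P_{\nu^{+}_\Delta}(T<t_0)\le\delta+\P_{\nu^{+}_\Delta}(T<t_0,\wh i=1)$, transport the last term to $\nu^{-}_\Delta$ through the likelihood ratio $e^{\Delta W_T/\sigma^2}$, and bound it using (a) the error bound $\P_{\nu^{-}_\Delta}(\wh i=1)\le\delta$ and (b) a finite-sample, uniform-in-time form of the LIL --- the same flavour of boundary-crossing estimate that underlies the lil'UCB confidence bounds --- controlling how large $W$ can be over $[1,t_0)$. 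Since $\tfrac{\Delta}{2\sigma^2}W_t-\tfrac{\Delta^2}{8\sigma^2}t$, maximised over $t$, is only of order $\log\log\Delta^{-2}$ and for $t<t_0$ stays strictly under that ceiling, the transported probability is small, giving $\P_{\nu^{+}_\Delta}(T<t_0)\le2\delta+o(1)$ as $\Delta\to0$. Then $\E_\Delta[T]\ge t_0\,\P_{\nu^{+}_\Delta}(T\ge t_0)\ge 2(1-\eps)(1-2\delta-o(1))\,\Delta^{-2}\log\log\Delta^{-2}$, and letting $\Delta\to0$ and then $\eps\to0$ yields $\limsup_{\Delta\to0}\E_\Delta[T]/(\Delta^{-2}\log\log\Delta^{-2})\ge2(1-2\delta)=2-4\delta$. (If $\nu_0$ puts positive mass on $\{T=\infty\}$ then $\E_\Delta[T]=\infty$ for small $\Delta$ and the claim is trivial, so we may assume $T<\infty$ a.s.)

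\emph{Main obstacle.} Everything hinges on the \emph{constant} $2-4\delta$, and hence on the precise finite-$\Delta$ LIL estimate. A soft version of the argument --- distinguish $\nu^{+}_\Delta$ from $\nu^{-}_\Delta$ by a Bretagnolle--Huber inequality, deduce that the stopped KL exceeds $\log\tfrac1{4\delta}$, hence $\E_\Delta[T]\gtrsim\Delta^{-2}$ by Wald's identity --- recovers only the $\Delta^{-2}$ scaling: a plain two-point change of measure is blind to the $\log\log$, which is intrinsically about the walk escaping its envelope at \emph{all} scales at once. The sharp bound instead needs (i) a boundary-crossing probability of the form $\nu_0\big(\exists\,t\le t_0:\ W_t\ge\beta\sqrt{2\sigma^2 t\log\log t}\big)=(\log t_0)^{\,1-\beta^2+o(1)}$ with its transition at the critical multiplier $\beta=1$, and (ii) bookkeeping careful enough that neither the error budget $\delta$ (appearing once on each hypothesis, which is what produces the $4\delta$) nor the LIL constant $1$ is lost by more than $o(1)$ as $\Delta\to0$. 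This delicate renewal-plus-LIL analysis is precisely the content of \cite{Farrell}; granted the reduction above, one may simply invoke it.
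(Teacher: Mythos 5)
Your proposal is correct in the sense that matters: like the paper, it ultimately rests on reducing the two-armed problem to a one-dimensional sequential sign test and then citing \cite{Farrell} (Theorem~\ref{thm:3}) for the sharp constant $2-4\delta$. The reductions differ, though. The paper's is more economical: the $n=2$ problem is at least as hard as the variant in which one arm's mean is known, and that variant \emph{is} Farrell's problem of testing $\mathrm{sign}(\Delta)$ from a single i.i.d.\ $\mathcal{N}(\Delta,1)$ stream, with the gap parameter matching exactly. Your reduction via the symmetric pair $\nu^{\pm}_{\Delta}$ and the difference walk $W_t$ also works, and since the likelihood ratio depends only on $W_t$ it even yields a stronger constant after rescaling; but before Theorem~\ref{thm:3} can be invoked you must argue that the decision may be assumed to depend on the stream $W$ alone (a sufficiency step your write-up asserts implicitly), since the theorem is stated for tests based on a single sample stream. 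One concrete caution about the self-contained sketch in your middle paragraph: the change-of-measure step as written does \emph{not} give $\P_{\nu^{+}_{\Delta}}(T<t_0)\le 2\delta+o(1)$. Even on the good LIL event and even restricted to $t<t_0$, the likelihood ratio $e^{\Delta W_T/\sigma^2}$ can be as large as $(\log \Delta^{-2})^{1+o(1)}$, so transporting $\{\wh{i}=1\}$ from $\nu^{+}_{\Delta}$ to $\nu^{-}_{\Delta}$ only bounds that term by $\delta\,(\log \Delta^{-2})^{1+o(1)}$, which diverges. You correctly flag this in your ``main obstacle'' paragraph; it is precisely why the paper does not attempt a self-contained proof of the sharp constant and instead proves only the weaker $\limsup>0$ statement in Appendix~\ref{app:LB}. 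So the proposal stands, but only by virtue of the citation; the sketched argument between the reduction and the citation should not be mistaken for a proof of the constant.
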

\begin{proof}
Consider a reduction of the best arm problem with $n=2$ in which the value of one arm is known.  In this case, the only strategy available is to sample the other arm some number of times to determine if it is less than or greater than the known value.   We have reduced the problem precisely to that studied by Farrell in \cite{Farrell}, restated below.
\end{proof}

\begin{theorem} \cite[Theorem 1]{Farrell}. \label{thm:3} Let $X_i \overset{i.i.d.}{\sim} \mathcal{N}(\Delta,1)$, where $\Delta\neq 0$ is unknown.  Consider testing whether $\Delta>0$ or $\Delta <0$. Let $Y \in \{-1,1\}$ be the decision of any such test based on $T$ samples (possibly a random number) and let $\delta \in (0,1/2)$.  If $\sup_{\Delta\neq 0}  \P(Y\neq \mbox{sign}(\Delta) ) \leq \delta$, 
then 
\begin{eqnarray} \nonumber
\limsup_{\Delta\rightarrow 0} \frac{\E_{\Delta}[T]}{\Delta^{-2} \log \log {\Delta^{-2}} } \geq 2-4 \delta .
\end{eqnarray}
\end{theorem}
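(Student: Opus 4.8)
The plan is a change-of-measure argument that reduces the statement to a question about a single Gaussian random walk, which is then settled by the law of the iterated logarithm; this is precisely the route of \cite[Theorem~1]{Farrell}. We may assume $\E_\Delta[T]<\infty$ for all small $\Delta$, as otherwise the claimed $\limsup$ is infinite. Write $S_t=\sum_{i=1}^t X_i$ and let $\P_\Delta$ be the law of the infinite sequence $X_1,X_2,\dots$ together with any internal randomization of the test. First I would record the likelihood-ratio identity $\P_\Delta(A)=\E_{-\Delta}\bigl[\ds1_A\,e^{2\Delta S_T}\bigr]$ for every $A\in\cF_T$, which follows because the ratio of the densities of the first $t$ samples under $\P_\Delta$ and $\P_{-\Delta}$ is $e^{2\Delta S_t}$ and $T$ is a stopping time (apply the identity at $T\wedge n$ and let $n\to\infty$). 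Taking $A=\{Y=1\}\cap\{S_T\le a\}$ with $a\ge 0$, bounding $e^{2\Delta S_T}\le e^{2\Delta a}$ on this set, and using $\P_{-\Delta}(Y=1)\le\delta$ together with $\P_\Delta(Y=-1)\le\delta$ yields
\begin{equation} \nonumber
\P_\Delta\bigl(S_T\le a\bigr)\;\le\;\delta\bigl(1+e^{2\Delta a}\bigr)\qquad\text{for all }a\ge 0 .
\end{equation}
Since $\E_\Delta[S_T]=\Delta\,\E_\Delta[T]$ by Wald's identity, it suffices to prove that $\E_\Delta[S_T]\ge(2-4\delta-o(1))\,\Delta^{-1}\log\log\Delta^{-2}$ as $\Delta\to 0$; equivalently, writing $S_t=\Delta t+W_t$ with $(W_t)$ a standard mean-zero Gaussian walk under $\P_\Delta$, the test must keep $W_T$ above $a-\Delta T$ with probability at least $1-\delta(1+e^{2\Delta a})$.

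The harder step is to produce the doubly-logarithmic factor, and here a single pair $(\Delta,-\Delta)$ is not enough: the displayed bound alone gives only $\E_\Delta[T]=\Omega(\Delta^{-2})$. The $\log\log$ must come from the requirement that one fixed test succeed along an entire sequence $\Delta_k\downarrow 0$, combined with the \emph{lower} half of the LIL, $\liminf_{t\to\infty}W_t/\sqrt{2t\log\log t}=-1$ almost surely. Because of this recurrence, $S_t=\Delta t+W_t$ dips back below $0$ for arbitrarily large $t$ as long as $\Delta t\lesssim\sqrt{2t\log\log t}$, i.e.\ as long as $t\lesssim 2\Delta^{-2}\log\log\Delta^{-2}$ --- exactly the crossover identified in Section~1. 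The heart of the argument is then to show that if $\E_\Delta[T]$ were below $(2-4\delta-\eta)\Delta^{-2}\log\log\Delta^{-2}$ along a sequence $\Delta_k\downarrow 0$, then (after truncating $T$ by Markov's inequality) the recurrence of $S_t=\Delta_k t+W_t$ below $0$ in the relevant range of $t$ would force $\P_{\Delta_k}(S_T\le 0)>2\delta$, contradicting the displayed bound at $a=0$; the precise constant $2-4\delta$ rather than $2$ is produced by carrying the $\delta$-slack $e^{2\Delta a}$ in that bound through the argument. An equivalent implementation replaces the sequence of alternatives by Robbins' method of mixtures: integrating the martingales $e^{\Delta S_t-\Delta^2 t/2}$ against a suitable measure on small $\Delta$ gives a single nonnegative martingale whose boundary-crossing behaviour reproduces the $\sqrt{2t\log\log t}$ scaling, and Ville's inequality converts the error constraint into the sample-size bound.

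The main obstacle is entirely in this second step. The LIL is an almost-sure asymptotic statement whereas $T$ is a data-dependent stopping time, so I would need a finite-horizon (maximal) version of the LIL for Gaussian walks of Darling--Robbins type, coupled with the truncation of $T$ via Markov's inequality applied to $\E_\Delta[T]$, and I would have to keep every $o(1)$ term uniform as $\Delta\to 0$ so that the sharp constant $2-4\delta$ genuinely survives in the $\limsup$. That uniform bookkeeping --- not any single inequality --- is where the work lies; the change-of-measure reduction in the first paragraph is routine. Since the statement is a verbatim restatement of \cite[Theorem~1]{Farrell}, the cleanest option in the paper is to invoke it directly, with the sketch above recording why it holds.
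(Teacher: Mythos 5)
Your bottom line --- invoke \cite[Theorem~1]{Farrell} directly --- is exactly what the paper does: Theorem~\ref{thm:3} is stated as a citation, Corollary~\ref{thm:lim} is proved by reducing to it, and the paper itself only proves a strictly weaker statement (a positive $\limsup$, without the constant $2-4\delta$) in Appendix~\ref{app:LB}. So as a disposition of the statement your proposal is admissible for the same reason the paper's is.

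Where your sketch and the paper's appendix argument genuinely diverge is in how they set up the hard step. You work with an arbitrary test and use the likelihood-ratio identity $\P_\Delta(A)=\E_{-\Delta}\bigl[\ds1_A\,e^{2\Delta S_T}\bigr]$ to obtain $\P_\Delta(S_T\le a)\le\delta\bigl(1+e^{2\Delta a}\bigr)$; this is correct and has the virtue of requiring nothing about the structure of the optimal test. The paper instead imports from Farrell the structural reduction (Fact~1 of the appendix) that an optimal test is a generalized SPRT stopping the first time $|S_t|\ge B_t$, and then argues about the threshold sequence $B_t$ via the LIL (Fact~2) and Markov's inequality. That reduction does real work, and it is precisely what your sketch is missing: you argue that the lower LIL makes $S_t=\Delta t+W_t$ dip below $0$ recurrently for $t\lesssim 2\Delta^{-2}\log\log\Delta^{-2}$ and that this ``forces $\P_{\Delta}(S_T\le 0)>2\delta$.'' But your change-of-measure bound controls $S_T$ \emph{at the stopping time}, not the running minimum of the path, and an adaptive test is free to stop only at times when $S_t>0$; turning the almost-sure recurrence of the path into a statement about the value of $S$ at a small data-dependent stopping time is exactly the delicate part of Farrell's proof. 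The GSPRT reduction dissolves this difficulty, because for a GSPRT the event $\{T\ge t_0\}$ is literally the event that the walk stays inside $\pm B_t$ for all $t<t_0$, which is what the LIL controls, and Markov then converts $\P(T\ge t_0)$ into a bound on $\E[T]$. So if you wanted to upgrade your sketch into even the paper's weaker self-contained appendix result, the missing ingredient is either Farrell's optimality-of-GSPRT reduction or some other device tying the stopping value $S_T$ to the path's excursions; as you yourself note, the change-of-measure inequality alone only yields $\E_\Delta[T]=\Omega(\Delta^{-2})$.
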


Corollary \ref{thm:lim} implies that in the fixed confidence setting, no best arm procedure can  have $\sup   \P(\widehat{i} \neq i^* ) \leq \delta$ \emph{and} use fewer than  $(2 - 4\delta) \sum_i \Delta_i^{-2} \log \log \Delta_i^{-2}$ samples in expectation for all $\Delta_i$. 

In brief, the result of Farrell follows by studying the form of a known optimal test, termed a generalized sequential probability ratio test, which compares the running empirical mean of $X$ after $t$ samples against a series of thresholds.  In the limit as $t$ increases, if the thresholds are not at least $\sqrt{(2/t) \log\log(t)}$ then the LIL implies the procedure will fail with probability approaching 1/2 for small values of $\mu$.  Setting the thresholds to be just greater than $\sqrt{(2/t) \log\log(t)}$, in the limit, one can show the expected number of samples must scale as $\Delta^{-2} \log \log {\Delta^{-2}}$. 

The proof in \cite{Farrell} is quite involved; to make this paper more self-contained we provide a short argument for a slightly simpler result than above in Appendix~\ref{app:LB}.

\newpage

\section{Procedure}
This section introduces lil'UCB.  The procedure operates by sampling the arm with the largest upper confidence bound; the confidence bounds are defined to account for the implications of the LIL. The procedure terminates when an arm has been sampled more than a constant fraction of the total number of samples. 
Fig. \ref{fig:lilucb} details the algorithm and Theorem \ref{th:lilucb} quantifies performance. In what follows, let $X_{i,s}$, $s=1,2,\dots$ denote independent samples from arm $i$ and let $T_i(t)$ denote the number of times arm $i$ has been sampled up to time $t$. Define $\hatmui :=  \frac{1}{T_i(t)} \sum_{s=1}^{T_i(t)}X_{i,s}$ to be the empirical mean of the $T_i(t)$ samples from arm $i$ up to time $t$. 

\begin{figure}[h]
\centerline{
\fbox{\parbox[b]{5.35in}{{\underline{\bf lil' UCB}}  \\ \vspace{-.05in} \\ \small
{\bf input}: confidence $\delta>0$, algorithm parameters $\varepsilon$, $a$, $\beta >0$ \\
{\bf initialize}: sample each arm once, set $T_i(t) =1$ for all $i$ and set $t=n$ \\
{\bf while} $T_i(t) < 1 + a \sum_{j \neq i} T_j(t) \mbox{ for all $i$}$ \\ \\
\indent \hspace{.4cm} sample arm \vspace{-.5cm}
\begin{eqnarray*}
\qquad I_t  & = & \argmax_{i \in \{1,\dots,n\}} \, \left\{ \hatmui + (1+\beta) (1+\sqrt{\eps}) \sqrt{ \frac{2 (1+\eps) \log \left(\frac{\log((1+\eps) T_i(t))}{\delta}\right)}{T_i(t)}} \right\}. \\ \ \end{eqnarray*}
\hspace{.4cm} set  \vspace{-.4cm}
\begin{eqnarray*}
& & \hspace{-7.3cm}  T_i(t+1)  =\left\{\begin{array}{ll} T_i(t)+1 & i=I_t \\ \ \\
T_i(t) & i \neq I_t \end{array}\right. \hfill \\ \
& & \hspace{-7.3cm}  t= t+1.  \hfill
\end{eqnarray*}
{\bf else} stop and output $\arg \max_{i \in \{1,\dots,n\}} T_i(t)$ 
}}}
\caption{\label{fig:lilucb} lil' UCB}
\end{figure}







\ \\ 
Define
$$\Hn_1 = \sum_{i \neq i^*} \frac{1}{\Delta_i^2} \ \  \ \ \text{and} \ \  \ \ \Hn_3 = \sum_{i \neq i^*} \frac{\log(\log(c/\Delta_i^2))}{\Delta_i^2} $$
where $c>0$ is a constant that appears in the analysis that makes the $\log\log$ term well defined for all $\Delta_i \in (0,1]$. Our main result is the following.
\begin{theorem} \label{th:lilucb}
For any $\epsilon, \beta >0$, $\delta \in (0,\log(1+\epsilon)/e)$\footnote{The range on $\delta$ is restricted to guarantee that $\log (\frac{\log((1+\eps) t )}{\delta})$ is well defined. This makes the analysis cleaner but in practice one can allow the full range of $\delta$ by using $\log (\frac{\log((1+\eps) t+2 )}{\delta})$ instead and obtain the same theoretical guarantees. \label{fnt:delta_foot}} and $$a \geq \frac{    1+ \frac{\log\left(2 \log\left( \left( \frac{2+\beta}{\beta} \right)^2 /\delta\right) \right) }{\log(1/\delta) }    }{1-\delta-\sqrt{\sqrt{\delta} \log(1/\delta)}}  \left(\frac{2+\beta}{\beta}\right)^2,$$
with probability at least $1-\sqrt{\rho \delta} - \frac{4 \rho\delta}{1-\rho\delta}$, lil' UCB stops after at most  $c_1 \Hn_1 \log(1/\delta) + c_3 \Hn_3 $ samples and outputs the optimal arm where $\rho = \frac{2+\eps}{\eps} \left(\frac{1}{\log(1+\eps)}\right)^{1+\epsilon}$ and $c_1,c_3 >0$ are constants that depend only on $\epsilon, \beta$. 
\end{theorem}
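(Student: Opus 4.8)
\medskip
\noindent\textbf{Proof proposal.} The plan is to split the statement into a \emph{correctness} part — the arm returned at the stopping time is $i^*$ — and a \emph{sample-complexity} part — the stopping time is at most $c_1\Hn_1\log(1/\delta)+c_3\Hn_3$ — and to prove both on a single favourable event $\Gcal$ whose complement has probability at most $\sqrt{\rho\delta}+\tfrac{4\rho\delta}{1-\rho\delta}$. The crucial design constraint is that $\Gcal$ must be built \emph{without} a union bound over the $n$ arms, which is exactly what the nonstandard stopping rule $T_i(t)\ge 1+a\sum_{j\neq i}T_j(t)$ buys us.

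The backbone is the finite-time LIL tail bound from the previous section: for a single arm, with $U_\delta(s):=(1+\sqrt\eps)\sqrt{2(1+\eps)\log\!\big(\log((1+\eps)s)/\delta\big)/s}$ the ``per-sample'' confidence radius (so the index uses $(1+\beta)U_\delta$), the probability that the running mean ever exits $\mu_i\pm U_\delta(s)$ is at most $\rho\delta$. For the best arm I would keep only the \emph{lower} one-sided event $\cE^*=\{\hat\mu_{i^*,s}\ge\mu_{i^*}-U_\delta(s)\ \forall s\ge1\}$, on which the best arm's index satisfies $\hat\mu_{i^*,T_{i^*}(t)}+(1+\beta)U_\delta(T_{i^*}(t))\ge\mu_{i^*}+\beta U_\delta(T_{i^*}(t))>\mu_{i^*}$ at all times. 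For the suboptimal arms I would \emph{not} intersect the $n-1$ analogous upper events — that costs $(n-1)\rho\delta$ — but instead control the \emph{aggregate} number $B$ of pulls a suboptimal arm receives while its empirical mean is above its band; the summed form of the LIL estimate makes $\E[B]$ small, Markov's inequality converts this into the $\sqrt{\rho\delta}$ contribution to the failure probability, and a geometric-series tail of the same estimates yields the $\tfrac{4\rho\delta}{1-\rho\delta}$. Call the resulting event $\Gcal$.

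On $\Gcal$ the rest is deterministic. First, arm $i\neq i^*$ is pulled at time $t$ only if its index beats that of $i^*$, so — using $\cE^*$ and setting aside the $B$ anomalous pulls — $\mu_i+(2+\beta)U_\delta(T_i(t))>\mu_{i^*}$, i.e.\ $(2+\beta)U_\delta(T_i(t))>\Delta_i$; since $U_\delta$ is decreasing this caps $T_i(t)\le\bar N_i=O\!\big(\Delta_i^{-2}(\log(1/\delta)+\log\log(c/\Delta_i^2))\big)$, hence $\sum_{i\neq i^*}T_i(t)\le N:=O(\Hn_1\log(1/\delta)+\Hn_3)$ for all $t$. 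Second — and this is what makes the stopping rule correct — comparing the index of $i$ against the running lower bound $\mathrm{UCB}_{i^*}(t)\ge\mu_{i^*}+\beta U_\delta(T_{i^*}(t))$ instead of against $\mu_{i^*}$ gives $(2+\beta)U_\delta(T_i(t))\ge\beta U_\delta(T_{i^*}(t))$, and solving this inequality — the slowly-varying $\log\log$ ratio contributes the factor $1+\log\!\big(2\log((\tfrac{2+\beta}{\beta})^2/\delta)\big)/\log(1/\delta)$, and the slack from $B$ the factor $1/(1-\delta-\sqrt{\sqrt\delta\log(1/\delta)})$ — yields $T_i(t)\le a\,T_{i^*}(t)$ for all $t$ once $a$ is at least the value in the hypothesis. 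Granting this, correctness is immediate: if $i^*$ triggers the stop then $T_{i^*}(t)\ge1+a\sum_{j\neq i^*}T_j(t)>\max_{j\neq i^*}T_j(t)$ since $a>1$; and a suboptimal $j$ cannot trigger it, since $T_j(t)\ge1+a\,T_{i^*}(t)$ would contradict $T_j(t)\le a\,T_{i^*}(t)$. Thus the output $\argmax_i T_i(t)$ is always $i^*$. For the sample bound, every step pulls either $i^*$ or a suboptimal arm, the latter at most $N$ times total, so after at most $n+N+(1+aN)$ steps we have $T_{i^*}(t)\ge1+aN\ge1+a\sum_{j\neq i^*}T_j(t)$ and the algorithm has halted; this is $O((1+a)N)$, which matches $c_1\Hn_1\log(1/\delta)+c_3\Hn_3$ once $c_1,c_3$ absorb $a$ and the $\eps,\beta$-dependent constants and one absorbs $n$ into $O(\Hn_1\log(1/\delta))$.

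The step I expect to be the real obstacle is the ``no union bound over the arms'' bookkeeping: defining the aggregate $B$ so that its expectation is genuinely small, and then propagating $B$ through \emph{both} the cap $\bar N_i$ and the relative inequality $T_i(t)\le a\,T_{i^*}(t)$ while matching the accumulated constants to the explicit lower bound demanded of $a$ and to the target failure probability $\sqrt{\rho\delta}+\tfrac{4\rho\delta}{1-\rho\delta}$. Everything downstream — the caps $\bar N_i$, the geometric step-count for the stopping time, and the final collection of constants into $c_1,c_3$ — should be routine algebra with $U_\delta$.
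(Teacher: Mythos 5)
Your high-level architecture matches the paper's: two deterministic comparisons of UCB indices (arm $i$ vs.\ the best arm to cap $T_i(t)$ by inverting $U$, and arm $i$ vs.\ a reference arm to show only $i^*$ can trigger the stop), assembled on a good event whose failure probability must be dimension-free in $n$. You also correctly identify that the entire difficulty is avoiding the union bound over arms. But the mechanism you propose for that step --- an aggregate count $B$ of ``anomalous pulls'' with $\E[B]$ small, converted by Markov --- has a genuine gap. If $B$ counts the (arm, sample-count) pairs at which an empirical mean exits its band, then $\E[B]=\sum_i\sum_s O\bigl((\delta/\log((1+\eps)s))^{1+\eps}\bigr)=\infty$, since $(\log s)^{-(1+\eps)}$ is not summable. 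If instead $B$ counts arms whose LIL event $\cE_i(\delta)$ fails outright, Markov on $\P(B\ge 1)\le (n-1)\rho\delta$ is exactly the union bound you are trying to avoid; and you cannot simply ``set aside'' the pulls of such an arm, because an arm whose confidence event has failed can look optimal forever and absorb an unbounded number of pulls, so there is no a priori cap on the damage a single bad arm does.

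The paper's two devices are different and worth contrasting with your plan. For the sample count (Lemma \ref{lem:1}), it defines the \emph{random confidence level} $\Omega_i=\max\{\omega:\cE_i(\omega)\text{ holds}\}$, which is almost surely positive with $\P(\Omega_i<\omega)\le\rho\omega$; the pull count of arm $i$ is then bounded by $\tau_i+\frac{2\gamma}{\Delta_i^2}\log(1/\Omega_i)$, a sub-exponential random variable, and a Bernstein-type bound on the independent sum $\sum_i Z_i$ yields a $2\rho\delta$ failure probability with no factor of $n$. For correctness (Lemma \ref{lem:2}), arm $i$ is compared not against $i^*$ alone but against \emph{all} arms $j<i$, with its own event taken at the stratified level $\cE_i(\delta^{i-1})$ (so these sum geometrically to $\frac{\rho\delta}{1-\rho\delta}$), and a Hoeffding bound on the fraction of reference events $\cE_j(\delta)$, $j<i$, that hold produces both the $\sqrt{\rho\delta}$ term and the $1/(1-\delta-\sqrt{\sqrt{\delta}\log(1/\delta)})$ factor in the admissible $a$ --- the factor you attributed to ``slack from $B$.'' Your downstream steps (inverting $U$ via \eqref{eq:stupidinequality}--\eqref{eq:stupidinequality2}, the contradiction with the stopping rule, the $O((1+a)N)$ step count) are consistent with the paper, but the proof does not close without replacing the $B$/Markov device by something like $\Omega_i$ and the stratified reference-arm comparison.
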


Note that regardless of the choice of $\eps,\beta$ the algorithm obtains the optimal query complexity of $ \mathbf{H}_1 \log(1/\delta) + \mathbf{H}_3$ up to constant factors. However, in practice some settings of $\epsilon,\beta$ perform better than others. We observe from the bounds in the proof that the optimal choice for the exploration constant is $\beta \approx 1.66$ but we suggest using $\beta =1$ and $a=\left( \frac{\beta+2}{\beta} \right)^2$. The optimal value for $\epsilon$ is less evident as it depends on $\delta$ but we suggest using $\epsilon=0.01$. If one is willing to forego theoretical guarantees, we recommend taking a more aggressive setting with $\epsilon=0$, $\beta=0.5$, and $a = 1+10/n$ which is motivated by simulation results presented later. We prove the theorem via two lemmas, one for the total number of samples and one for the correctness of the algorithm. In the lemmas we give precise constants.

\section{Proof of Theorem \ref{th:lilucb}} \label{sec:analysis}
Before stating the two main lemmas that imply the result, we first present a finite form of the law of iterated logarithm. This finite LIL bound is necessary for our analysis and may also prove useful for other applications.

\begin{lemma} \label{lem:lil}
Let $X_1, X_2, \ldots$ be i.i.d. centered sub-Gaussian\footnote{A random variable $X$ is said to be sub-Gaussian with scale parameter $\sigma$ if for all $t \in \R$ we have $\E[ \exp\{tX\} ] \leq \exp\{ \sigma^2 t^2 / 2 \}$.} random variables with scale parameter $\sigma$. For any $\epsilon \in (0,1)$ and $\delta \in (0,\log(1+\epsilon)/e)$\footnote{See footnote \ref{fnt:delta_foot}} one has with probability at least $1-\frac{2+\eps}{\eps} \left(\frac{\delta}{\log(1+\eps)}\right)^{1+\epsilon}$ for any $t \geq 1$,
$$\sum_{s=1}^t X_s \leq (1+\sqrt{\eps}) \sqrt{2 \sigma^2 (1+\eps) t \log \left(\frac{\log((1+\eps) t )}{\delta}\right)} .$$
\end{lemma}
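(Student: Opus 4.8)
The plan is a peeling argument built on the exponential supermartingale of a sub-Gaussian sum. The only probabilistic input is the following: for each fixed $\lambda>0$ the process $M_t^\lambda := \exp(\lambda S_t - \lambda^2\sigma^2 t/2)$ (with $S_t=\sum_{s=1}^t X_s$) is a nonnegative supermartingale satisfying $\E[M_0^\lambda]=1$, which is immediate from the defining bound $\E[e^{\lambda X}]\le e^{\sigma^2\lambda^2/2}$ together with independence of the $X_s$. Ville's maximal inequality then gives, for every $v>0$, $\P(\exists\, t\ge 1:\ \lambda S_t - \lambda^2\sigma^2 t/2 \ge v) \le e^{-v}$.

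Next I would partition the time axis into the geometric blocks $B_k := \{t\ge 1:\ (1+\eps)^k \le t < (1+\eps)^{k+1}\}$, $k=0,1,2,\dots$, which cover every integer $t\ge1$. Write $D=1+\eps$ and $g(t):=\log(\log(Dt)/\delta)$; since $\delta<\log(1+\eps)$ (a consequence of the hypothesis $\delta<\log(1+\eps)/e$) we have $\log(Dt)/\delta\ge \log D/\delta>1$ for all $t\ge1$, so $g$ is positive and increasing, and hence the claimed right-hand side $(1+\sqrt\eps)\sqrt{2\sigma^2 D\,t\,g(t)}$ is increasing on $B_k$ and minimized at the left endpoint $t=D^k$, where it equals $u_k := (1+\sqrt\eps)\sqrt{2\sigma^2 D^{k+1}\log(\log(D^{k+1})/\delta)}$. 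So it is enough to bound $\P(\exists\, t\in B_k:\ S_t\ge u_k)$: on that event some $t\in B_k$ has $S_t\ge u_k$ with $t<D^{k+1}$, so applying the inequality above with $\lambda=\lambda_k:=u_k/(\sigma^2 D^{k+1})$, the maximizer of $\lambda\mapsto \lambda u_k-\lambda^2\sigma^2 D^{k+1}/2$, yields
$$\P(\exists\, t\in B_k:\ S_t\ge u_k)\ \le\ \exp\left(-\frac{u_k^2}{2\sigma^2 D^{k+1}}\right)\ =\ \left(\frac{\delta}{\log(D^{k+1})}\right)^{(1+\sqrt\eps)^2}.$$

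Finally I would union-bound over $k$. Since $(1+\sqrt\eps)^2\ge 1+\eps$ and $0<\delta/\log(D^{k+1})=\delta/((k+1)\log(1+\eps))<1$, the exponent may be relaxed from $(1+\sqrt\eps)^2$ to $1+\eps$, so the probability that the claimed inequality fails for some $t\ge1$ is at most
$$\sum_{k=0}^{\infty}\left(\frac{\delta}{(k+1)\log(1+\eps)}\right)^{1+\eps}=\left(\frac{\delta}{\log(1+\eps)}\right)^{1+\eps}\sum_{m=1}^{\infty}m^{-(1+\eps)}\le \left(\frac{\delta}{\log(1+\eps)}\right)^{1+\eps}\left(1+\frac{1}{\eps}\right)\le \frac{2+\eps}{\eps}\left(\frac{\delta}{\log(1+\eps)}\right)^{1+\eps},$$
using $\sum_{m\ge1}m^{-(1+\eps)}\le 1+\int_1^{\infty}x^{-(1+\eps)}\,dx=1+1/\eps$. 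On the complementary event every $t\ge1$ lies in some $B_k$ and satisfies $S_t<u_k\le (1+\sqrt\eps)\sqrt{2\sigma^2(1+\eps)t\log(\log((1+\eps)t)/\delta)}$, which is the assertion of the lemma.

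There is no deep difficulty; the only step needing care is the discretization bookkeeping. Within each block one must over-approximate the running time by the \emph{left} endpoint inside the target LIL curve (to define the threshold $u_k$) but by the \emph{right} endpoint inside the variance term $\lambda^2\sigma^2 t/2$ of the maximal inequality, and one must check that the resulting $D=1+\eps$ inflation of the curve, combined with $(1+\sqrt\eps)^2\ge 1+\eps$, leaves a per-block exponent big enough for the series $\sum_k (k+1)^{-(1+\eps)}$ to sum to the stated constant $\tfrac{2+\eps}{\eps}$. Checking that the admissible range of $\delta$ keeps every nested logarithm positive, so that the monotonicity and exponent manipulations all go the right way, is similarly routine but worth making explicit.
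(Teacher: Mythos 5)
Your proof is correct, but it takes a genuinely different route from the paper's. The paper also peels time into geometric blocks (via the integer grid $u_0=1$, $u_{k+1}=\lceil(1+\eps)u_k\rceil$), but it splits $S_t=(S_t-S_{u_k})+S_{u_k}$ and controls the two pieces with two separate union bounds: a fixed-time Chernoff bound at the grid points, contributing a deviation of $\sqrt{1+\eps}\,\psi(u_k)$, and Hoeffding's maximal inequality for the within-block increment, contributing $\sqrt{\eps}\,\psi(u_{k+1})$; the prefactor $1+\sqrt{\eps}$ then arises from adding these two deviations, and the constant $\tfrac{2+\eps}{\eps}$ from adding the two failure probabilities. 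You instead apply a single time-uniform bound per block --- Ville's inequality for the exponential supermartingale $\exp(\lambda_k S_t-\lambda_k^2\sigma^2 t/2)$ with a block-tuned $\lambda_k$ --- lower-bounding the boundary by its value at the left endpoint of the block while upper-bounding $t$ by the right endpoint inside the variance term. This requires only one union bound over blocks, yields the slightly better constant $\tfrac{1+\eps}{\eps}\le\tfrac{2+\eps}{\eps}$, and in fact shows the prefactor $1+\sqrt{\eps}$ could be replaced by $\sqrt{1+\eps}$, since your per-block exponent is $(1+\sqrt{\eps})^2$ where only $1+\eps$ is needed for the series to converge. The paper's decomposition buys a proof from purely finite-horizon tools and works directly on the integer grid (its $\lceil\cdot\rceil$ bookkeeping is where the hypothesis $\delta<\log(1+\eps)/e$ earns its keep); your argument is tighter and has less bookkeeping, and the steps you flag as delicate (positivity of the nested logarithm under $\delta<\log(1+\eps)$, the left/right endpoint asymmetry, the exponent relaxation on a base in $(0,1)$) all check out.
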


\begin{proof}
We denote $S_t = \sum_{s=1}^t X_s$, and 
$\psi(x) = \sqrt{2 \sigma^2 x \log \left(\frac{\log(x)}{\delta}\right)}$. 
We also define by induction the sequence of integers $(u_k)$ as follows: $u_0=1$, $u_{k+1} = \lceil (1+\epsilon) u_k \rceil$.
\newline

\noindent
\textbf{Step 1: Control of $S_{u_k}, k \geq 1$.} 
The following inequalities hold true thanks to an union bound together with Chernoff's bound, the fact that $u_k \geq (1+\epsilon)^k$, and a simple sum-integral comparison:
\begin{eqnarray*}
\P \left(\exists k \geq 1 : S_{u_k} \geq \sqrt{1+\eps} \ \psi(u_k) \right) & \leq & \sum_{k=1}^{\infty} \exp\left( - (1+\eps) \log \left(\frac{\log(u_k)}{\delta}\right)\right) \\
& \leq & \sum_{k=1}^{\infty} \left(\frac{\delta}{k \log(1+\eps)}\right)^{1+\epsilon} \\
& \leq & \left(1+\frac{1}{\eps}\right) \left(\frac{\delta}{\log(1+\eps)}\right)^{1+\epsilon} .
\end{eqnarray*}
\newline

\noindent
\textbf{Step 2: Control of $S_{t}, t \in (u_k, u_{k+1})$.} 
Recall that Hoeffding's maximal inequality\footnote{It is an easy exercise to verify that Azuma-Hoeffding holds for martingale differences with sub-Gaussian increments, which implies Hoeffding's maximal inequality for sub-Gaussian distributions.} states that for any $m \geq 1$ and $x>0$ one has
$$\P(\exists \ t \in [m] \ \text{s.t.} \ S_t \geq x) \leq \exp \left(-\frac{x^2}{2 \sigma^2 m}\right).$$ 
This implies that the following inequalities hold true (by using trivial manipulations on the sequence $(u_k)$):
\begin{align*}
& \P \left(\exists \ t \in \{u_k+1, \hdots, u_{k+1} - 1\} : S_{t} - S_{u_k} \geq \sqrt{\epsilon} \ {\psi(u_{k+1})} \right) \\
& =
\P \left(\exists \ t \in [u_{k+1} -u_k - 1] : S_{t} \geq \sqrt{\epsilon} \ {\psi(u_{k+1})} \right) \\
& \leq \exp\left( - \epsilon \frac{u_{k+1}}{u_{k+1} -u_k - 1}\log \left(\frac{\log(u_{k+1})}{\delta}\right)\right) \\
& \leq \exp\left( - (1+\epsilon) \log \left(\frac{\log(u_{k+1})}{\delta}\right)\right) \\
& \leq \left(\frac{\delta}{(k+1)\log(1+\eps)}\right)^{1+\epsilon} .
\end{align*}
\newline

\noindent
\textbf{Step 3:} By putting together the results of Step 1 and Step 2 we obtain that with probability at least $1-\frac{2+\eps}{\eps} \left(\frac{\delta}{\log(1+\eps)}\right)^{1+\epsilon}$,
one has for any $k \geq 0$ and any $t \in \{u_k+1, \hdots, u_{k+1} \}$,
\begin{eqnarray*}
S_t & = & S_t - S_{u_k} + S_{u_k} \\
& \leq & \sqrt{\eps} \ \psi(u_{k+1}) + \sqrt{1+\eps} \ \psi(u_k) \\
& \leq & \sqrt{\eps} \ \psi((1+\eps) t) + \sqrt{1+\eps} \ \psi(t) \\
& \leq & (1+\sqrt{\eps}) \ \psi((1+\eps) t),
\end{eqnarray*}
which concludes the proof.
\end{proof}

Without loss of generality we assume that $\mu_1 > \mu_2 \geq \hdots \geq \mu_n$. To shorten notation we denote
$$U(t,\omega) = (1+\sqrt{\eps}) \sqrt{\frac{2 (1+\eps)}{t} \log \left(\frac{\log((1+\eps) t)}{\omega}\right)}.$$
The following events will be useful in the analysis:
$$\cE_i(\omega) = \{\forall t \geq 1, |\widehat{\mu}_{i,t} - \mu_i| \leq U(t,\omega)\} $$
where $\widehat{\mu}_{i,t} = \frac{1}{t}\sum_{j=1}^t x_{i,j}$.
Note that Lemma \ref{lem:lil} shows $\P(\cE_i(\omega)) = O(\omega)$. The following trivial inequalities will also be useful (the second one is derived from the first inequality and the fact that $\frac{x+a}{x+b} \leq \frac{a}{b}$ for $a \geq b$, $x \geq 0$). For $t \geq 1$,
\begin{align} \label{eq:stupidinequality}
&  \frac{1}{t} \log \left(\frac{\log((1+\eps) t)}{\omega}\right) \geq c \Rightarrow t \leq \frac{1}{c} \log \left(\frac{2 \log ((1+\eps)/(c\omega))}{\omega} \right) , \\
\label{eq:stupidinequality2}
&  \frac{1}{t} \log \left(\frac{\log((1+\eps) t)}{\omega}\right) \geq \frac{c}{s} \log \left(\frac{\log((1+\eps) s)}{\delta}\right) \ \text{and} \ \omega \leq \delta \Rightarrow t \leq \frac{s}{c} \frac{\log\left(2 \log\left( \frac{1}{c \omega } \right) / \omega \right)}{\log(1/\delta)} . 
\end{align}

\begin{lemma} \label{lem:1}
Let $\gamma = 2(2+\beta)^2 (1+\sqrt{\eps})^2 (1+\eps)$ and $\rho = \frac{2+\eps}{\eps} \left(\frac{1}{\log(1+\eps)}\right)^{1+\epsilon}$. With probability at least $1- 2\rho \delta$ one has for any $t\geq1$,
$$\sum_{i=2}^n T_i(t) \leq n + \gamma 8e  \Hn_1 \log(1/\delta) + \sum_{i=2}^n \gamma \frac{\log(2\log(\gamma (1+\eps)/\Delta_i^2))}{\Delta_i^2}.$$
\end{lemma}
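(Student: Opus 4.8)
The plan is to bound the total number of suboptimal pulls $\sum_{i=2}^n T_i(t)$ by controlling, for each suboptimal arm $i$, how many times it can be selected before its upper confidence bound drops below the lower confidence bound of the best arm. First I would work on the favorable event $\cE := \cE_1(\sqrt{\delta}) \cap \bigcap_{i=2}^n \cE_i(\delta)$; by Lemma \ref{lem:lil} and a union bound, $\P(\cE) \geq 1 - \rho\sqrt{\delta} - (n-1)\rho\delta$, but a sharper bookkeeping (using that the lemma only claims probability $1-2\rho\delta$, so presumably a cleverer pairing of confidence levels across arms is used — most likely applying $\cE_1$ at level $\sqrt\delta$ and the suboptimal arms at a level that sums to $\rho\delta$) gives the stated $1-2\rho\delta$. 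On $\cE$, for the best arm we have $\hatmus \geq \mu_1 - U(T_1(t),\sqrt\delta)$, and for each suboptimal arm $\hat\mu_{i,T_i(t)} \leq \mu_i + U(T_i(t),\delta) = \mu_1 - \Delta_i + U(T_i(t),\delta)$.

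The core step is the following: whenever arm $i \neq 1$ is pulled at time $t$ (so $I_t = i$), its index must exceed the index of arm $1$. On $\cE$ this forces
$$\mu_1 - \Delta_i + 2(1+\beta)U(T_i(t),\delta) \;\geq\; \mu_1 - U(T_1(t),\sqrt\delta) + (1+\beta)U(T_1(t),\sqrt\delta) \;\geq\; \mu_1,$$
where I used $\beta>0$ so the index of arm $1$ is at least $\mu_1$ (its empirical mean plus the full bonus dominates $\mu_1$ minus the sub-bonus confidence width, since $(1+\beta)\geq 1$). Rearranging, $2(1+\beta)U(T_i(t),\delta) \geq \Delta_i$, i.e.
$$\frac{1}{T_i(t)}\log\!\left(\frac{\log((1+\eps)T_i(t))}{\delta}\right) \;\geq\; \frac{\Delta_i^2}{2(2+\beta)^2(1+\sqrt\eps)^2(1+\eps)} \;=\; \frac{\Delta_i^2}{\gamma}.$$
Now invoke the "stupid inequality" \eqref{eq:stupidinequality} with $c = \Delta_i^2/\gamma$ and $\omega=\delta$: this yields $T_i(t) \leq \frac{\gamma}{\Delta_i^2}\log\!\big(\frac{2\log((1+\eps)\gamma/(\Delta_i^2\delta))}{\delta}\big)$. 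This is the last time arm $i$ can possibly be selected, so $T_i(t) \leq \frac{\gamma}{\Delta_i^2}\log\!\big(\frac{2\log((1+\eps)\gamma/(\Delta_i^2\delta))}{\delta}\big) + 1$ for all $t$ (the $+1$ because once the index condition fails the arm might still have been pulled once more; actually since we established the bound holds at the moment of the pull, summing over "+1 per arm" recovers the leading $n$ term). The final arithmetic is to split $\log\big(\frac{2\log(\cdot/\delta)}{\delta}\big) \leq \log(1/\delta) + \log(2\log(\cdot)) + (\text{lower order})$ and bound the $\log(1/\delta)$-proportional part by $\gamma \cdot (\text{const}) \cdot \Hn_1 \log(1/\delta)$; getting the clean constant $8e$ requires massaging $\log(\log(1/\delta)/\delta) \leq 8e\,\log(1/\delta)$-type estimates valid in the allowed range of $\delta$, and absorbing the residual $\log\log$ terms into the $\Hn_3$-like sum $\sum_i \gamma\,\Delta_i^{-2}\log(2\log(\gamma(1+\eps)/\Delta_i^2))$.

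The main obstacle I expect is twofold. First, pinning down the probability $1-2\rho\delta$ rather than a bound that degrades with $n$: this must come from choosing the per-arm confidence levels $\omega_i$ for the suboptimal arms so that $\sum_{i\geq 2}\P(\cE_i(\omega_i)^c) = O(\rho\delta)$ while still keeping $\omega_i \leq \delta$ (needed to apply \eqref{eq:stupidinequality}) — possibly by using level $\delta$ uniformly and arguing the union bound constant differently, or the statement is simply stated with $\delta$ already small enough that $(n-1)\rho\delta$ is dominated; I would check the paper's convention here. Second, the constant-chasing to land exactly on $8e$ and on the stated form of the $\log\log$ sum is fiddly: one has to be careful that $U(\cdot,\sqrt\delta)$ (scale $\sqrt\delta$ for arm 1) does not blow up relative to $U(\cdot,\delta)$ — indeed $\log(\log/\sqrt\delta) \leq \log(\log/\delta)$ since $\sqrt\delta \geq \delta$, so arm 1's confidence width is no larger, which is what makes the displayed index comparison go through cleanly. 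Everything else is routine substitution into \eqref{eq:stupidinequality}.
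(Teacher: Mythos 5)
Your Step 1 is essentially the paper's Step 1: on the event that arm $1$'s and arm $i$'s confidence bounds hold, a pull of arm $i$ forces $(2+\beta)\,U(T_i(t),\cdot) \geq \Delta_i$ (note it is $(2+\beta)$, not $2(1+\beta)$, that comes out of the three-inequality chain $\mu_i + U + (1+\beta)U \geq \hat{\mu}_i + (1+\beta)U \geq \hat{\mu}_1 + (1+\beta)U(T_1(t),\delta) \geq \mu_1$), and then \eqref{eq:stupidinequality} caps $T_i(t)$. That part is fine apart from the constant slip and the fact that the paper fixes arm $1$ at level $\delta$ and leaves the \emph{suboptimal} arm's level $\omega$ free, which is the opposite of your assignment and matters for what follows.

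The genuine gap is exactly the obstacle you flagged and then guessed wrong about: how to get failure probability $2\rho\delta$ with no $n$-dependence. It is not achieved by a cleverer union bound, nor by assuming $n\rho\delta$ is dominated. The paper's Step 2 defines the \emph{random} confidence level $\Omega_i = \max\{\omega \geq 0 : \cE_i(\omega) \text{ holds}\}$, observes from Lemma \ref{lem:lil} that $\P(\Omega_i < \omega) \leq \rho\omega$, and rewrites the Step-1 bound as $T_i(t)\ds1\{\cE_1(\delta)\} \leq 1 + \tau_i + \frac{2\gamma}{\Delta_i^2}\log(1/\Omega_i)$, valid \emph{always} on $\cE_1(\delta)$ rather than only on a per-arm good event. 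The quantities $Z_i = \frac{2\gamma}{\Delta_i^2}\log(\rho/\Omega_i)$ are then independent sub-exponential random variables with $\P(Z_i > x) \leq e^{-x/a_i}$, $a_i = 2\gamma/\Delta_i^2$, and a Bernstein-type bound on $\sum_i Z_i$ evaluated at $x = 4e\|a\|_1\log(1/(\rho\delta))$ contributes only one extra $\rho\delta$ of failure probability while producing the $8e\gamma\Hn_1\log(1/\delta)$ term. So the constant $8e$ is an artifact of the sub-exponential concentration inequality, not of massaging $\log(\log(1/\delta)/\delta)$ against $\log(1/\delta)$ as you conjectured. Your deterministic union-bound route would prove a bound of the right order but only with probability $1-(n-1)\rho\delta - \rho\sqrt{\delta}$, which is a strictly weaker statement (vacuous for large $n$) and not the lemma as claimed; this concentration-over-random-confidence-levels device is the central new idea of the lemma.
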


\begin{proof}
We decompose the proof in two steps.
\newline

\noindent
\textbf{Step 1.} Let $i > 1$. Assuming that $\cE_1(\delta)$ and $\cE_i(\omega)$ hold true and that $I_t = i$ one has
$$\mu_i + U(T_i(t), \omega) + (1+\beta) U(T_i(t), \delta) \geq \hatmui + (1+\beta) U(T_i(t), \delta) \geq \widehat{\mu}_{1,T_1(t)} + (1+\beta) U(T_1(t), \delta) \geq \mu_1 ,$$
which implies $(2+\beta) U(T_i(t), \min(\omega, \delta)) \geq \Delta_i$. Thus using \eqref{eq:stupidinequality} with $c = \frac{\Delta_i^2}{2(2+\beta)^2 (1+\sqrt{\eps})^2 (1+\eps)}$ one obtains that if $\cE_1(\delta)$ and $\cE_i(\omega)$ hold true and $I_t = i$ then
\begin{eqnarray*}
T_i(t) & \leq & \frac{2(2+\beta)^2 (1+\sqrt{\eps})^2 (1+\eps)}{\Delta_i^2} \log \left(\frac{2 \log (2(2+\beta)^2 (1+\sqrt{\eps})^2 (1+\eps)^2/\Delta_i^2 /\min(\omega, \delta))}{\min(\omega, \delta)} \right) \\
& \leq & \tau_i + \frac{\gamma}{\Delta_i^2} \log\left(\frac{ \log(e/\omega) }{\omega}\right)  \leq  \tau_i +  \frac{2\gamma}{\Delta_i^2} \log\left(\frac{ 1 }{\omega}\right) ,
\end{eqnarray*}
where $\gamma = 2(2+\beta)^2 (1+\sqrt{\eps})^2 (1+\eps)$, and $\tau_i = \frac{\gamma}{\Delta_i^2} \log \left(\frac{2 \log(\gamma (1+\eps)/\Delta_i^2)}{\delta} \right)$. 

Since $T_i(t)$ only increases when $I_t$ is played the above argument shows that the following inequality is true for any time $t \geq 1$:
\begin{equation} \label{eq:nonstupidinequality1}
T_i(t) \ds1\{\cE_1(\delta) \cap \cE_i(\omega) \}\leq 1 + \tau_i +  \frac{2\gamma}{\Delta_i^2} \log\left(\frac{1}{\omega}\right) .
\end{equation}
\newline

\noindent
\textbf{Step 2.} We define the following random variable:
$$\Omega_i = \max\{ \omega \geq 0 : \cE_i(\omega) \ \text{holds true} \}.$$
Note that $\Omega_i$ is well-defined and by Lemma \ref{lem:lil} it holds that $\P(\Omega_i < \omega) \leq \rho \omega$ where $\rho = \frac{2+\eps}{\eps} \left(\frac{1}{\log(1+\eps)}\right)^{1+\epsilon}$. Furthermore one can rewrite \eqref{eq:nonstupidinequality1} as 
\begin{equation} \label{eq:nonstupidinequality2}
T_i(t) \ds1\{\cE_1(\delta)\}\leq 1 + \tau_i +  \frac{2 \gamma}{\Delta_i^2} \log\left(\frac{1}{\Omega_i}\right) .
\end{equation}
We use this equation as follows:
\begin{eqnarray}
\P\left( \sum_{i=2}^n T_i(t) > x + \sum_{i=2}^n (\tau_i+1)\right) & \leq &
\rho \delta + \P\left( \sum_{i=2}^n T_i(t) > x + \sum_{i=2}^n (\tau_i+1) \big| \cE_1(\delta)\right) \notag \\
& \leq & \rho \delta + \P\left( \sum_{i=2}^n  \frac{2\gamma}{\Delta_i^2} \log\left(\frac{1}{\Omega_i}\right)  > x \right) . \label{eq:stuff1}
\end{eqnarray}
Let $Z_i =  \frac{2 \gamma}{\Delta_i^2} \log\left(\frac{\rho}{ \Omega_i}\right)$, $i \in [n]$. Observe that these are independent random variables and since $\P(\Omega_i < \omega) \leq \rho \omega$ it holds that
$\P(Z_i > x) \leq \exp(- x / a_i)$ with $a_i =  2\gamma/\Delta_i^2$. Using standard techniques to bound the sum of sub-exponential random variables one directly obtains that
\begin{equation} \label{eq:subexp}
\P\left( \sum_{i=2}^n Z_i \geq x \right) \leq \exp\left( - \min\left\{ \frac{x^2}{8 e^2 \|a\|_2^2}, \frac{x}{4 e \|a\|_\infty } \right\} \right) \leq \exp\left( - \min\left\{ \frac{x^2}{8 e^2 \|a\|_1^2}, \frac{x}{4 e \|a\|_1 } \right\} \right) .
\end{equation} 
Putting together \eqref{eq:stuff1} and \eqref{eq:subexp} with $x=4 e \|a\|_1 \log(1/ (\rho\delta))$ one obtains
$$\P\left( \sum_{i=2}^n T_i(t) > \sum_{i=2}^n \left(\frac{ 8 e \gamma \log(1/\delta) }{\Delta_i^2} + \tau_i+1\right)\right) \leq2 \rho \delta ,$$
which concludes the proof.
\end{proof}

\begin{lemma} \label{lem:2}
Let $\rho = \frac{2+\eps}{\eps} \left(\frac{1}{\log(1+\eps)}\right)^{1+\epsilon}$. If $$a \geq \frac{    1+ \frac{\log\left(2 \log\left( \left( \frac{2+\beta}{\beta} \right)^2 /\delta\right) \right) }{\log(1/\delta) }    }{1-\delta-\sqrt{\sqrt{\delta} \log(1/\delta)}}  \left(\frac{2+\beta}{\beta}\right)^2,$$ then for all 
$i=2,\dots n$ and $t = 1,2,\dots$,
$$T_i(t) < 1 + a \sum_{j \neq i} T_j(t)   $$
with probability at least $1-\sqrt{\rho \delta} - \frac{2 \rho\delta}{1-\rho\delta}$.
\end{lemma}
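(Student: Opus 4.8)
Since $\sum_{j\neq i}T_j(t)\geq T_1(t)$, it is enough to prove the stronger statement that, with the stated probability, $T_i(t) < 1 + a\,T_1(t)$ for every suboptimal arm $i$ and every $t\geq 1$. Together with Lemma~\ref{lem:1} (the algorithm does stop) and the observation that the hypothesis forces $a\geq(\tfrac{2+\beta}{\beta})^2\geq 1$, this is precisely what Theorem~\ref{th:lilucb} requires: on this event the algorithm can only halt because arm $1$ violates its inequality, and then $\argmax_j T_j(t)=1$. So fix a suboptimal $i$ and a time $t$. If arm $i$ has not been sampled since initialization then $T_i(t)=1<1+a\,T_1(t)$ and we are done, so let $\sigma\leq t$ be the last time with $I_\sigma=i$. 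Since $T_j(\cdot)$ is nondecreasing and $T_i(t)=T_i(\sigma)+1$, it suffices to show $T_i(\sigma)<a\,T_1(\sigma)$, as then $T_i(t)<1+a\,T_1(\sigma)\leq 1+a\,T_1(t)$.

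The deterministic heart of the argument runs as follows. On $\cE_1(\delta)$ (which holds with probability at least $1-\rho\delta$ by Lemma~\ref{lem:lil}), arm $1$'s confidence index is at all times at least $\mu_1+\beta\,U(T_1(\cdot),\delta)$, so, because arm $i$ was selected over arm $1$ at time $\sigma$,
\[\widehat\mu_{i,T_i(\sigma)}+(1+\beta)\,U(T_i(\sigma),\delta)\ \geq\ \mu_1+\beta\,U(T_1(\sigma),\delta).\]
Using $\widehat\mu_{i,T_i(\sigma)}\leq\mu_i+U(T_i(\sigma),\Omega_i)$ (with $\Omega_i$ as defined in the proof of Lemma~\ref{lem:1}), together with $\mu_1-\mu_i=\Delta_i\geq 0$ and the fact that $U(t,\cdot)$ is nonincreasing, this rearranges to $\beta\,U(T_1(\sigma),\delta)\leq(2+\beta)\,U(T_i(\sigma),\min(\Omega_i,\delta))$. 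Squaring cancels the common factor $2(1+\eps)(1+\sqrt{\eps})^2$ in $U^2$, and applying \eqref{eq:stupidinequality2} with $c=(\tfrac{\beta}{2+\beta})^2$, $s=T_1(\sigma)$ and $\omega=\min(\Omega_i,\delta)\leq\delta$ gives
\[T_i(\sigma)\ \leq\ \Big(\tfrac{2+\beta}{\beta}\Big)^2 T_1(\sigma)\ \frac{\log\!\big(2\log(1/(c\,\Omega_i))/\Omega_i\big)}{\log(1/\delta)}.\]
Consequently $T_i(\sigma)<a\,T_1(\sigma)$ holds as soon as $(\tfrac{2+\beta}{\beta})^2\frac{\log(2\log(1/(c\,\Omega_i))/\Omega_i)}{\log(1/\delta)}\leq a$. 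In the case $\Omega_i\geq\delta$ the left-hand side equals $(\tfrac{2+\beta}{\beta})^2\big(1+\frac{\log(2\log((\frac{2+\beta}{\beta})^2/\delta))}{\log(1/\delta)}\big)$, which is strictly smaller than the hypothesized lower bound on $a$ since there the denominator $1-\delta-\sqrt{\sqrt{\delta}\log(1/\delta)}$ is strictly less than $1$.

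The case $\Omega_i<\delta$ is where I expect the main work, and it splits into a deterministic and a probabilistic part. Deterministically, one checks that the peculiar form of the bound on $a$ is exactly what makes the displayed factor still at most $a$ provided $\Omega_i\geq\delta^{1/(1-\delta-\sqrt{\sqrt{\delta}\log(1/\delta)})}$ (a routine but somewhat delicate estimate, comparing $\log(2\log(1/(c\,\Omega_i))/\Omega_i)$ at this value of $\Omega_i$ with $\log(1/\delta)+\log(2\log(1/(c\delta)))$). Probabilistically, one must show that $\Omega_i$ stays above this threshold for the relevant arms with probability at least $1-\sqrt{\rho\delta}-\frac{2\rho\delta}{1-\rho\delta}$. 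A naive union bound over the $n$ arms here would introduce a spurious factor of $n$; to avoid it I would argue as in the proof of Lemma~\ref{lem:1}, exploiting that at most one arm can ever hold more than the fraction $\tfrac{a}{1+a}>\tfrac12$ of all samples (since $a>1$), so that the failure event concerns a single random arm, and combining this with the independence of the $\Omega_i$ and a geometric peeling of the confidence level, which produces the $\frac{2\rho\delta}{1-\rho\delta}$ term (the $\sqrt{\rho\delta}$ reflecting a coarser control needed elsewhere). Assembling the two cases over all suboptimal $i$ and all $t$ yields the lemma.
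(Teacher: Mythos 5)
There is a genuine gap in the probabilistic half of your argument, and it originates in your opening reduction. You propose to prove the stronger bound $T_i(t) < 1 + a\,T_1(t)$ by comparing arm $i$ against arm $1$ only. The deterministic part of this is fine (it is the $j=1$ instance of the paper's Step~1), and your treatment of the case $\Omega_i \geq \delta$ is correct. But in the case $\Omega_i < \delta$ you need, simultaneously for all $n-1$ suboptimal arms, that $\Omega_i$ stay above a threshold only slightly below $\delta$, and each such event fails with probability of order $\rho\delta$; a union bound costs a factor of $n$, as you note. Your proposed escape does not work. The observation that at most one arm can ever hold more than the fraction $a/(1+a)$ of the samples does not identify a \emph{fixed} arm: which arm gets over-sampled is determined adaptively by the same randomness that determines the $\Omega_i$'s, so you cannot condition on ``a single random arm'' without effectively union-bounding over all of them. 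And geometric peeling of the confidence level (taking $\omega=\delta^{i-1}$ for arm $i$, so that the failure probabilities $\rho\delta^{i-1}$ sum geometrically) is incompatible with comparing to arm $1$ alone: by \eqref{eq:stupidinequality2} the deterministic conclusion then degrades to roughly $T_i(t) \lesssim a\,(i-1)\,T_1(t)$, because $\log(1/\delta^{i-1})=(i-1)\log(1/\delta)$ enters the numerator, and the factor $i-1$ cannot be absorbed into $1+a\,T_1(t)$, nor into $1+a\sum_{j\neq i}T_j(t)$.

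The paper resolves exactly this tension by comparing arm $i$ against \emph{all} $i-1$ better arms $j<i$, not against arm $1$ alone. On $\cE_i(\delta^{i-1})\cap\cE_j(\delta)$ one gets $(T_i(t)-1)\leq (i-1)\,\kappa\,T_j(t)$ for each good $j$; if at least a fraction $1-\alpha$ of the $i-1$ better arms are good, summing this over the good $j$'s cancels the factor $i-1$ and yields $(1-\alpha)(T_i(t)-1)\leq \kappa\sum_{j\neq i}T_j(t)$. The number of good comparison arms is then controlled by Hoeffding's inequality applied to the independent indicators $\ds1\{\cE_j(\delta)\}$ with $\alpha$ of the form $\rho\delta+\sqrt{\sqrt{\rho\delta}\log(1/(\rho\delta))}$; this multi-arm averaging is precisely the source of the denominator $1-\delta-\sqrt{\sqrt{\delta}\log(1/\delta)}$ in the condition on $a$ and of the two error terms $\sqrt{\rho\delta}$ and $\frac{2\rho\delta}{1-\rho\delta}$. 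Without it your outline cannot reach the stated, $n$-independent failure probability.
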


\begin{proof}
We decompose the proof in two steps.
\newline

\noindent
\textbf{Step 1.} Let $i > j$. Assuming that $\cE_i(\omega)$ and $\cE_j(\delta)$ hold true and that $I_t = i$ one has
\begin{eqnarray*}
\mu_i + U(T_i(t), \omega) + (1+\beta) U(T_i(t), \delta) & \geq & \hatmui + (1+\beta) U(T_i(t), \delta) \\
& \geq & \widehat{\mu}_{j,T_j(t)} + (1+\beta) U(T_j(t), \delta) \\
& \geq & \mu_j + \beta U(T_j(t), \delta),
\end{eqnarray*}
which implies $(2+\beta) U(T_i(t), \min(\omega, \delta)) \geq \beta U(T_j(t), \delta)$. Thus using \eqref{eq:stupidinequality2} with $c = \left(\frac{\beta}{2+\beta} \right)^2$ one obtains that if $\cE_i(\omega)$ and $\cE_j(\delta)$ hold true and $I_t = i$ then
\begin{eqnarray*}
T_i(t) & \leq &  \left(\frac{2+\beta}{\beta} \right)^2  \frac{\log\left(2 \log\left( \left( \frac{2+\beta}{\beta} \right)^2/ \min(\omega, \delta)  \right)/ \min(\omega, \delta) \right)}{\log(1/\delta)}  T_j(t) .
\end{eqnarray*}

Similarly to Step 1 in the proof of Lemma \ref{lem:1} we use the fact that $T_i(t)$ only increases when $I_t$ is played and the above argument to obtain the following inequality for any time $t \geq 1$:
\begin{equation} \label{eq:nonstupidinequality2}
(T_i(t)-1) \ds1\{\cE_i(\omega) \cap \cE_j(\delta)\}\leq   \left(\frac{2+\beta}{\beta} \right)^2 \frac{\left(2 \log\left( \left( \frac{2+\beta}{\beta} \right)^2 /\min(\omega, \delta) \right)  /\min(\omega, \delta) \right)}  {\log(1/\delta)} T_j(t) .
\end{equation}
\newline

\noindent
\textbf{Step 2.} Using \eqref{eq:nonstupidinequality2} with $\omega = \delta^{i-1}$ we see that
\begin{align*}
\ds1\{\cE_i(\delta^{i-1})\} \frac{1}{i-1}\sum_{j=1}^{i-1}  \ds1\{\cE_j(\delta)\} > 1 - \alpha \ \Rightarrow \ (1- \alpha) (T_i(t)-1) \leq    \kappa \sum_{j\neq i} T_j(t)
\end{align*}
where $\kappa = \left(\frac{2+\beta}{\beta} \right)^2 \left(1+ \frac{\log\left(2 \log\left( \left( \frac{2+\beta}{\beta} \right)^2 /\delta\right) \right) }{\log(1/\delta) }  \right)$. This implies the following, using that $\P(\cE_i(\omega)) \geq 1 - \rho \omega$,
\begin{align*}
& \P\left( \exists \ (i,t) \in \{2, \hdots, n\} \times \{1,\dots\} : (1- \alpha) (T_i(t)-1) \geq \kappa  \sum_{j\neq i} T_j(t)\right) \\
& \leq \P\left( \exists \ i \in \{2, \hdots, n\} : \ds1\{\cE_i(\delta^{i-1})\} \frac{1}{i-1}\sum_{j=1}^{i-1} \ds1\{\cE_j(\delta)\} \leq 1 - \alpha\right) \\
& \leq \sum_{i=2}^n \P(\cE_i(\delta^{i-1}) \ \text{does not hold}) + \sum_{i=2}^n \P\left(\frac{1}{i-1}\sum_{j=1}^{i-1} \ds1\{\cE_j(\delta)\} \leq 1 - \rho\delta -  (\alpha - \rho\delta) \right) .
\end{align*}
Let $\delta' = \rho\delta$. Note that by a simple Hoeffding's inequality and a union bound one has
$$\P\left(\frac{1}{i-1}\sum_{j=1}^{i-1} \ds1\{\cE_j(\delta)\} \leq 1 - \delta' -  (\alpha - \delta') \right) \leq \min((i-1) \delta', \exp(- 2 (i-1) (\alpha - \delta')^2) ,$$
and thus we obtain with the above calculations
\begin{align*}
& \P\left( \exists \ (i,t) \in \{2, \hdots, n\} \times \{1,\dots\} : \left(1- \delta'-\sqrt{\sqrt{\delta'} \log(1/\delta')}\right) (T_i(t)-1) \geq\kappa \sum_{j\neq i} T_j(t)\right) \\
& \leq \sum_{i=2}^n \left(\delta'^{i-1} + \min((i-1) \delta', \exp(- 2 (i-1) \sqrt{\delta'} \log(1/\delta')))\right) \\
& \leq \sqrt{\delta'} + \frac{2 \delta'}{1-\delta'} = \sqrt{\rho \delta} + \frac{2 \rho \delta}{1-\rho\delta}.
\end{align*}
\end{proof}

Treating $\epsilon$ and factors of $\log\log(\beta)$ as constants, Lemma~\ref{lem:1} says that the total number of times the suboptimal arms are sampled does not exceed $(\beta+2)^2 \left( c_1 \mathbf{H}_1 \log(1/\delta) + c_3 \mathbf{H}_3 \right)$. Lemma~\ref{lem:2} states that only the optimal arm will meet the stopping condition with $a=c_a \left( \frac{2+\beta}{\beta} \right)^2$. Combining these results, we observe that the total number of times all the arms are sampled does not exceed $(\beta+2)^2 \left( c_1 \mathbf{H}_1 \log(1/\delta) + c_3 \mathbf{H}_3 \right)  \left(1 + c_a \left( \frac{2+\beta}{\beta} \right)^2 \right)$, completing the proof of the theorem. We also observe using the approximation $c_a=1$,  the optimal choice of $\beta \approx 1.66$.

\section{Implementation and Simulations}

In this section we investigate how the state of the art methods for solving the best arm problem behave in practice. Before describing each of the algorithms in the comparison, we briefly describe a LIL-based stopping criterion that can be applied to any of the algorithms.
\begin{description} 
\item \hspace{.25in}\textbf{LIL Stopping (LS)} : For any algorithm and $i \in [n]$, after the $t$-th time we have that the $i$-th arm has been sampled $T_i(t)$ times and accumulated a mean $\hatmui$. We can apply Lemma~\ref{lem:lil} (with a union bound) so that with probability at least  $1-\frac{2+\eps}{\eps} \left(\frac{\delta}{\log(1+\eps)}\right)^{1+\epsilon}$ 
\begin{align*}
 \left| \hatmui - \mu_i \right|  &\leq B_{i,T_i(t)} := (1+\sqrt{\eps}) \sqrt{ \frac{ 2 \sigma^2 (1+\eps)  \log \left(\frac{2n\log((1+\eps) T_i(t) + 2)}{\delta}\right)}{T_i(t)} } 
\end{align*}
for all $t\geq1$ and all $i\in[n]$. We may then conclude that if $\widehat{i} := \arg\max_{i \in [n]} \hatmui$ and $\widehat{\mu}_{\widehat{i} ,T_{\widehat{i}} (t)} - B_{\widehat{i} ,T_{\widehat{i}} (t)} \geq  \widehat{\mu}_{j,T_j(t)} + B_{j,T_j(t)}$ then with high probability we have that $\widehat{i} = i_*$.
\end{description}
The LIL stopping condition is somewhat naive but often quite effective in practice for smaller size problems when $\log(n)$ is negligible. To implement the strategy for any fixed confidence algorithm, simply run the algorithm with $\delta/2$ in place of $\delta$ and assign the other $\delta/2$ confidence to the LIL stopping criterion. The algorithms compared were:
\begin{itemize}
\item {\em Nonadaptive + LS} : Draw a random permutation of $[n]$ and sample the arms in an order defined by cycling through the permutation until the LIL stopping criterion is met.  
\item {\em Exponential-Gap Elimination (+LS)} \cite{karnin2013almost} : This procedure proceeds in stages where at each stage, {\em median elimination}\cite{even2002pac} is used to find a 
$\epsilon$-optimal arm whose mean is guaranteed (with large probability) to be within a specified $\epsilon>0$ of the mean of the best arm, and then arms are discarded if their empirical mean is sufficiently below the empirical mean of the $\epsilon$-optimal arm. The algorithm terminates when there is only one arm that has not yet been discarded (or when the LIL stopping criterion is met). 
\item {\em Successive Elimination}  \cite{even2002pac} : This procedure proceeds in the same spirit as {\em  Exponential-Gap Elimination} except the landmark arm is equal to $\widehat{i} := \arg\max_{i \in [n]} \hatmui$. One observes that the algorithm's usual stopping condition and the LIL stopping criterion are one in the same.   
\item {\em lil'UCB (+LS)} : The procedure of Figure~\ref{fig:lilucb} is run with $\epsilon = 0.01$, $\beta = 1$, $a=(2+\beta)^2/\beta^2 = 9$, and $\delta = \left( \frac{\nu \eps}{5(2+\eps)}\right)^{1/(1+\eps)}$ for input confidence $\nu$. The algorithm terminates according to Fig.~\ref{fig:lilucb} or when the LIL stopping criterion is met.
\item {\em lil'UCB Heuristic} : The procedure of Figure~\ref{fig:lilucb} is run with $\epsilon = 0$, $\beta = 1/2$, $a=1+10/n$, and $\delta = \nu/5$ for input confidence $\nu$. These parameter settings do not satisfy the conditions of Theorem~\ref{th:lilucb}, and thus there is no guarantee that this algorithm will find the best arm.  However, as the experiments show, this algorithm performs exceptionally well in practice and therefore we recommend this lil'UCB algorithm in practice.  The algorithm terminates according to Fig.~\ref{fig:lilucb}. 
\item {\em UCB1 + LS} \cite{auer2002finite} : This is the classical UCB procedure that samples the arm $$\arg\max_{i \in [n]} \ \widehat{\mu}_{i,T_i(t)} + \textstyle\sqrt{ \frac{2 \log( t ) }{ T_i(t) }}$$ at each time $t$ and terminates when the LIL stopping criterion is met. 
\end{itemize}
We did not compare to {\em PRISM} of \cite{jamieson2013finding} because the algorithm and its empirical performance are very similar to {\em Exponential-Gap Elimination} so its inclusion in the comparison would provide very little added value. We remark that the first three algorithms require $O(1)$ amortized computation per time step, the lil'UCB algorithms require $O(\log(n))$ computation per time step using smart data structures\footnote{To see this, note that the sufficient statistic for lil'UCB for deciding the next arm to sample depends only on $\hatmui$ and $T_i(t)$ which only changes for an arm if that particular arm is pulled. Thus, it suffices to maintain an ordered list of the upper confidence bounds in which deleting, updating, and reinserting the arm requires just $O(log(n))$ computation. Contrast this with a UCB procedure in which the upper confidence bounds depend explicitly on $t$ so that the sufficient statistics for pulling the next arm changes for all arms after each pull, requiring $\Omega(n)$ computation per time step.}, and UCB1 requires $O(n)$ computation per time step. Due to the poor computational scaling of UCB1 with respect to the problem size $n$, UCB1 was not run on all problem sizes due to practical time constraints. 

Three problem scenarios were considered over a variety problem sizes (number of arms). The ``1-sparse'' scenario sets $\mu_1=1/4$ and $\mu_i = 0$ for all $i=2,\dots,n$ resulting in a hardness of $\mathbf{H}_1 = 4n$. The ``$\alpha=0.3$'' and ``$\alpha=0.6$'' scenarios consider $n+1$ arms with $\mu_0 = 1$ and $\mu_i = 1 - (i/n)^\alpha$ for all $i=1,\dots,n$ with respective hardnesses of $\mathbf{H}_1 \approx 3/2n$ and $\mathbf{H}_1 \approx 6 n^{1.2}$. That is, the $\alpha=0.3$ case should be about as hard as the sparse case with increasing problem size while the $\alpha=0.6$ is considerably more challenging and grows super linearly with the problem size. See \cite{jamieson2013finding} for an in-depth study of the $\alpha$ parameterization.  All experiments were run with input confidence $\delta = 0.1$. All realizations of the arms were Gaussian random variables with mean $\mu_i$ and variance $1/4$\footnote{The variance was chosen such that the analyses of algorithms that assumed realizations were in $[0,1]$ and used Hoeffding's inequality were still valid using sub-Gaussian tail bounds with scale parameter $1/2$.}.

Each algorithm terminates at some finite time with  high probability so we first consider the relative stopping times of each of the algorithms in Figure~\ref{fig:stoppingTimes}. Each algorithm was run on each problem scenario and problem size 40 times. The first observation is that {\em Exponential-Gap Elimination (+LS)} appears to barely perform better than uniform sampling with the LIL stopping criterion. This confirms our suspicion that the constants in {\em median elimination} are just too large to make this algorithm practically relevant. It should not come as a great surprise that {\em successive elimination} performs so well because even though it is suboptimal in the problem parameters, its constants are  small leading to a practical algorithm. The {\em lil'UCB+LS} and {\em UCB1+LS} algorithms seem to behave  comparably and lead the pack of algorithms with theoretical algorithms. The LIL stopping criterion seems to have a large impact on performance of the regular {\em lil'UCB} algorithm, but it had no impact on the {\em lil'UCB Heuristic} variant (not plotted). While  {\em lil'UCB Heuristic} has no theoretical guarantees of outputting the best arm, we remark that over the course of all of our tens of thousands of experiments, the algorithm never failed to terminate with the best arm.

 \begin{figure}[h]
        \centering
        \begin{subfigure}[b]{0.3\textwidth}
        	\caption*{1-sparse,  $\mathbf{H}_1 = 4 n$}
                \includegraphics[width=\textwidth]{./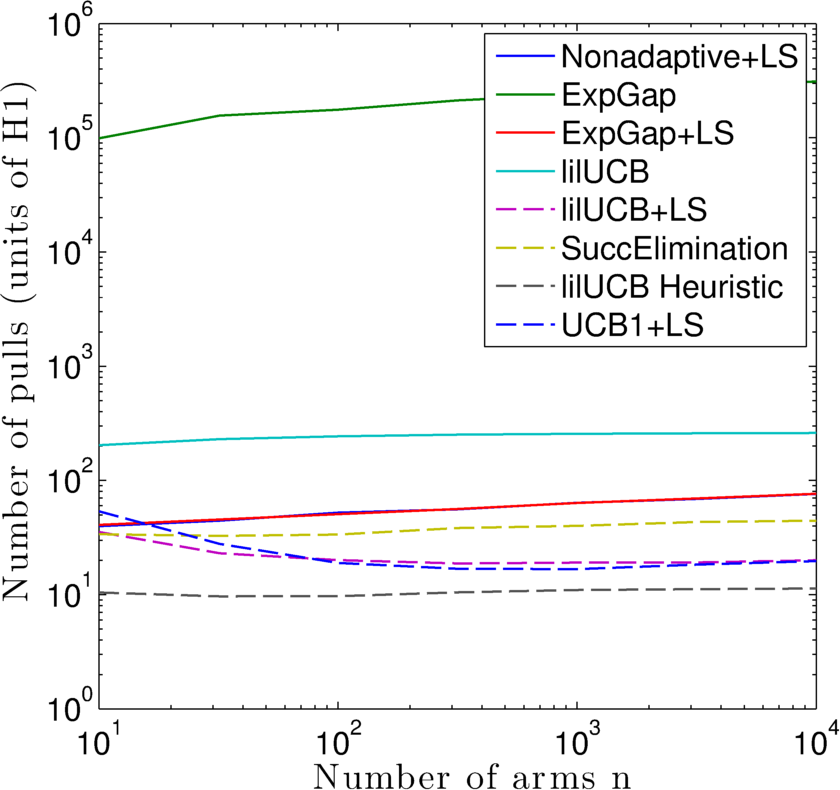}
        \end{subfigure}%
        ~ 
        \begin{subfigure}[b]{0.3\textwidth}
        \caption*{$\alpha=0.3$,  $\mathbf{H}_1 \approx \frac{3}{2} n$}
                \includegraphics[width=\textwidth]{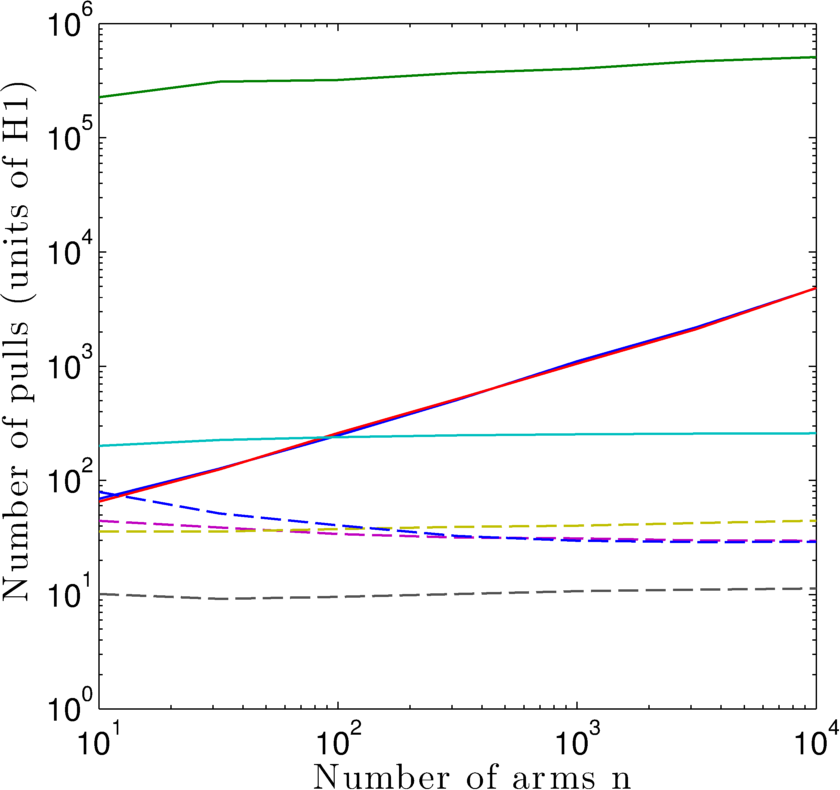}
        \end{subfigure}
        ~ 
        \begin{subfigure}[b]{0.3\textwidth}
        \caption*{$\alpha=0.6$,  $\mathbf{H}_1 \approx 6 n^{1.2}$}
                \includegraphics[width=\textwidth]{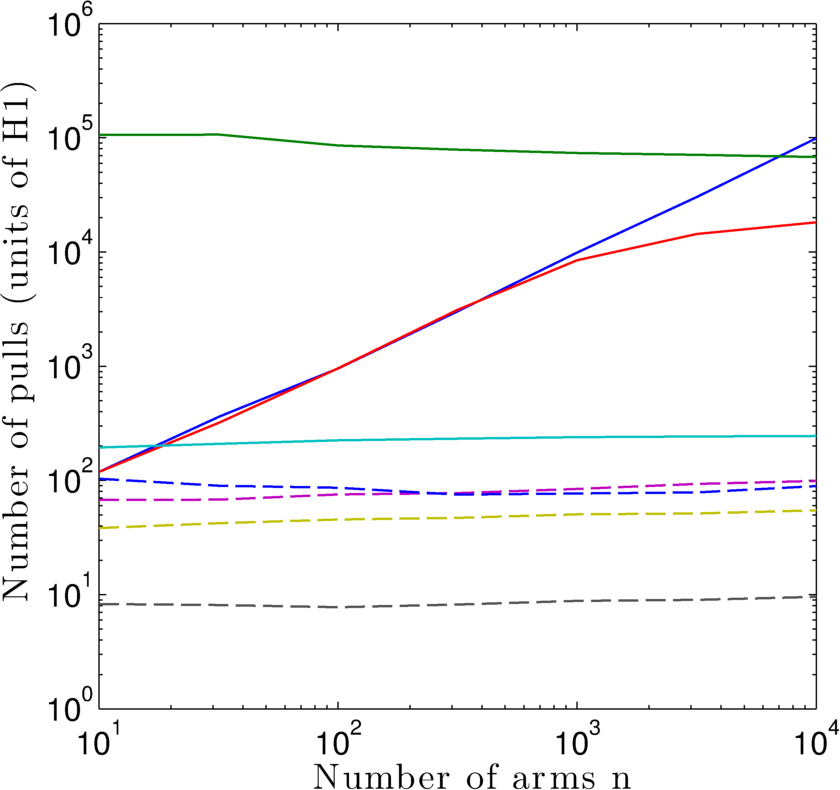}
        \end{subfigure}
    \caption{Stopping times of the algorithms for the three scenarios for a variety of problem sizes. }
    \label{fig:stoppingTimes}
\end{figure}  
        
In reality, one cannot always wait for an algorithm to run until it terminates on its own so we now explore how the algorithms perform if the algorithm must output an arm at every time step before termination (this is similar to the setting studied in \cite{BMS09}). For each algorithm, at each time we output the arm with the highest empirical mean. Because the procedure for outputting the arm is the same across algorithms, measuring how often this output arm is the best arm is a measure of how much information is being gathered by the algorithm's sampling procedure. Clearly in the beginning, the probability that a sub-optimal arm is output by any algorithm is very close to 1. However, as time increases we know that this probability of error should decrease to at least the desired input confidence, and likely, to zero. Figure~\ref{fig:anyTimePlot} shows the ``anytime'' performance of the algorithms for the three scenarios and unlike the empirical stopping times of the algorithms, we now observe large differences between the algorithms. Each experiment was repeated 5000 times. Again we see essentially no difference between nonadaptive sampling and the exponential-gap procedure. While in the stopping time plots of Figure~\ref{fig:stoppingTimes} {\em successive elimination} appears neck-and-neck with the UCB algorithms, we observe in Figure~\ref{fig:anyTimePlot} that the UCB algorithms are collecting sufficient information to output the best arm at least twice as fast as {\em successive elimination}. This tells us that the stopping conditions for the UCB algorithms are still too conservative in practice which motivates the use of the {\em lil'UCB Heuristic} algorithm which appears to perform very strongly across all metrics.

\begin{figure}[H]
        \centering
        \begin{subfigure}[b]{0.01\textwidth}
                \begin{turn}{90}$n=10$\end{turn}\vspace{.6in}
        \end{subfigure}
        ~
        \begin{subfigure}[b]{0.3\textwidth}
        	\caption*{1-sparse,  $\mathbf{H}_1 = 4 n$}
                \includegraphics[width=\textwidth]{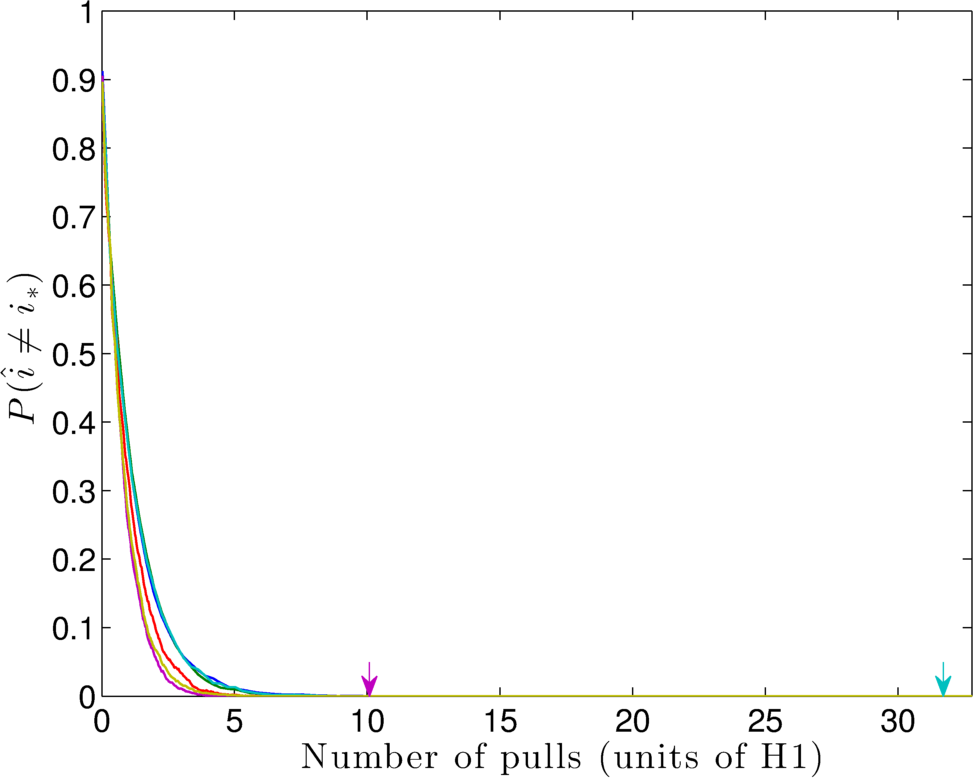}
        \end{subfigure}%
        ~ 
        \begin{subfigure}[b]{0.3\textwidth}
        \caption*{$\alpha=0.3$,  $\mathbf{H}_1 \approx \frac{3}{2} n$}
                \includegraphics[width=\textwidth]{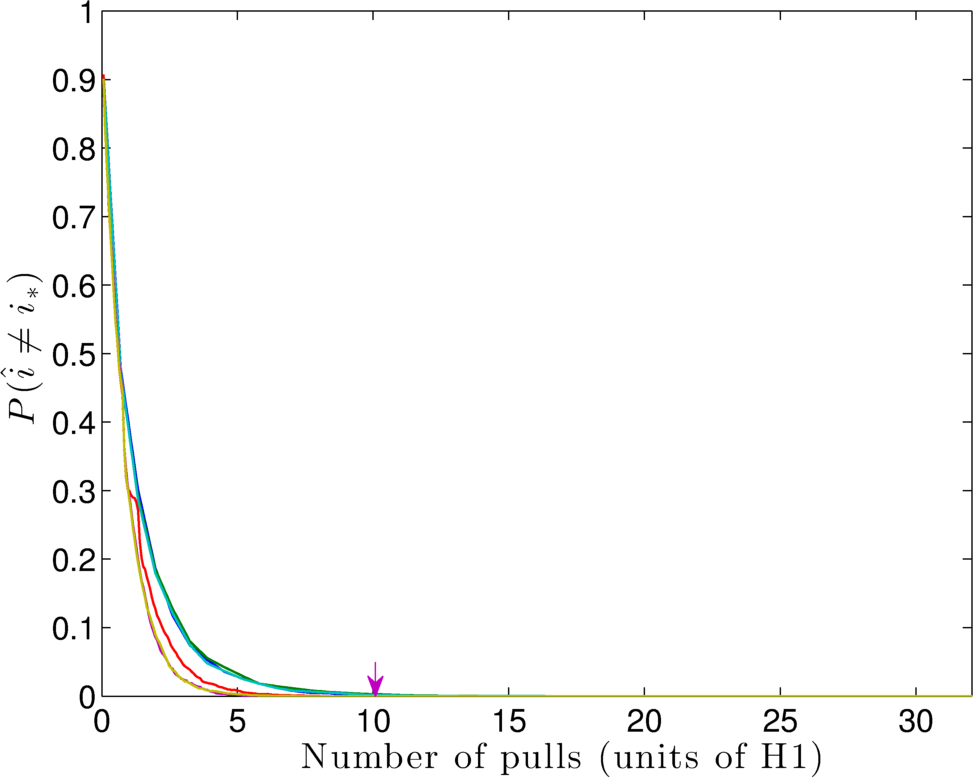}
        \end{subfigure}
        ~ 
        \begin{subfigure}[b]{0.3\textwidth}
        \caption*{$\alpha=0.6$,  $\mathbf{H}_1 \approx 6 n^{1.2}$}
                \includegraphics[width=\textwidth]{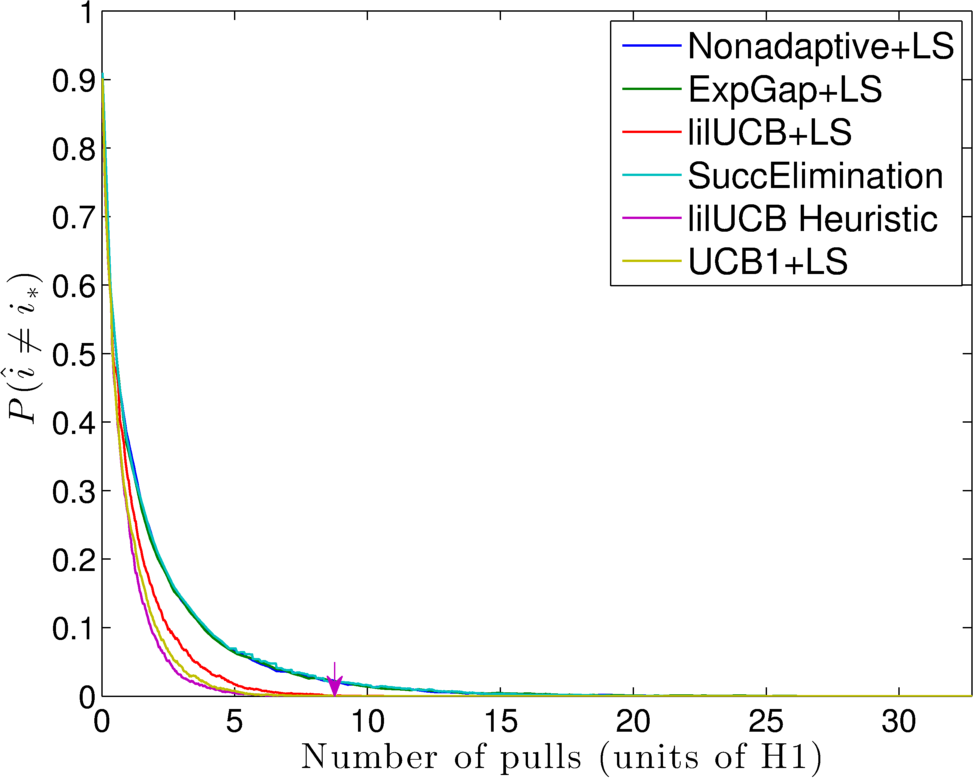}
        \end{subfigure}
        
        \begin{subfigure}[b]{0.01\textwidth}
                \begin{turn}{90}$n=100$\end{turn}\vspace{.55in}
        \end{subfigure}
        ~
        \begin{subfigure}[b]{0.3\textwidth}
                \includegraphics[width=\textwidth]{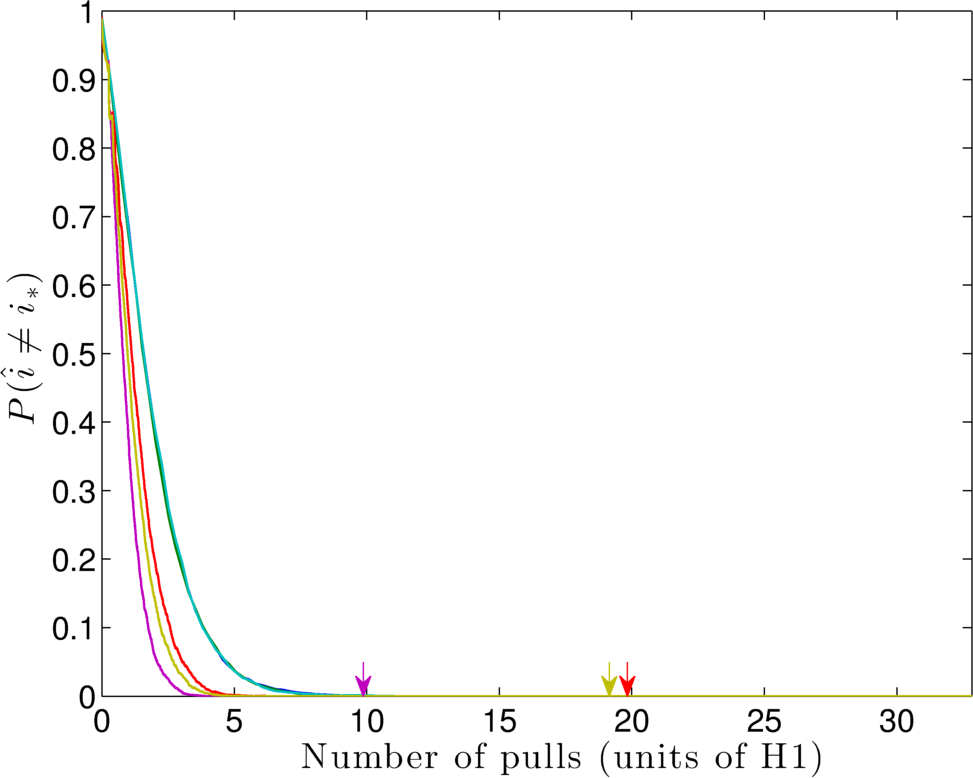}
        \end{subfigure}%
        ~ 
        \begin{subfigure}[b]{0.3\textwidth}
                \includegraphics[width=\textwidth]{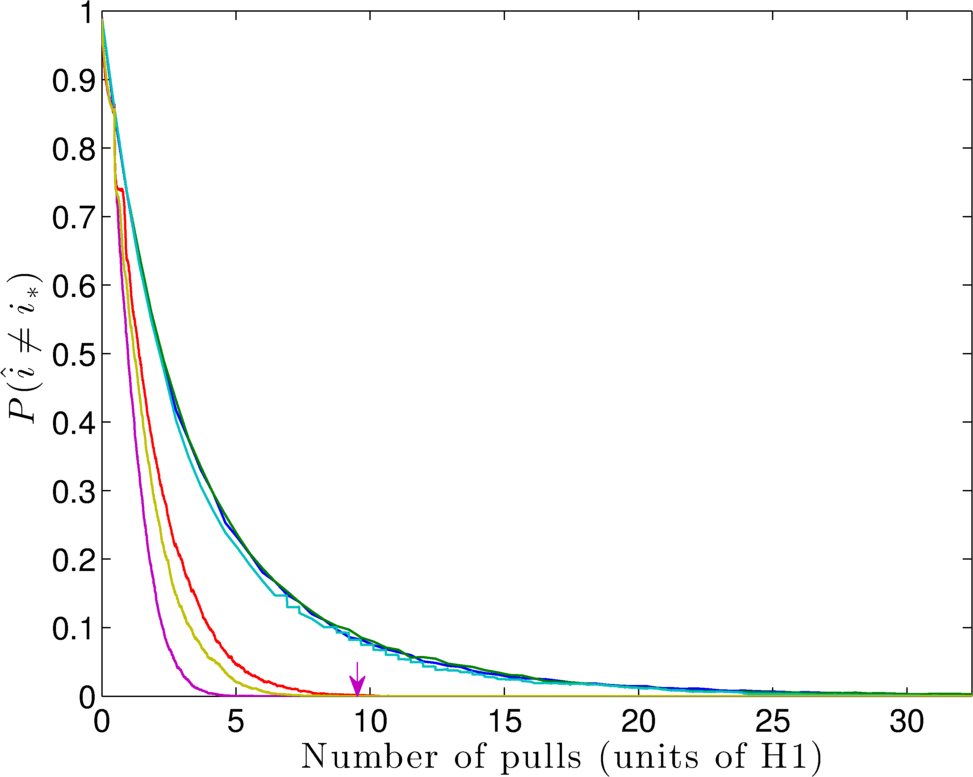}
        \end{subfigure}
        ~ 
        \begin{subfigure}[b]{0.3\textwidth}
                \includegraphics[width=\textwidth]{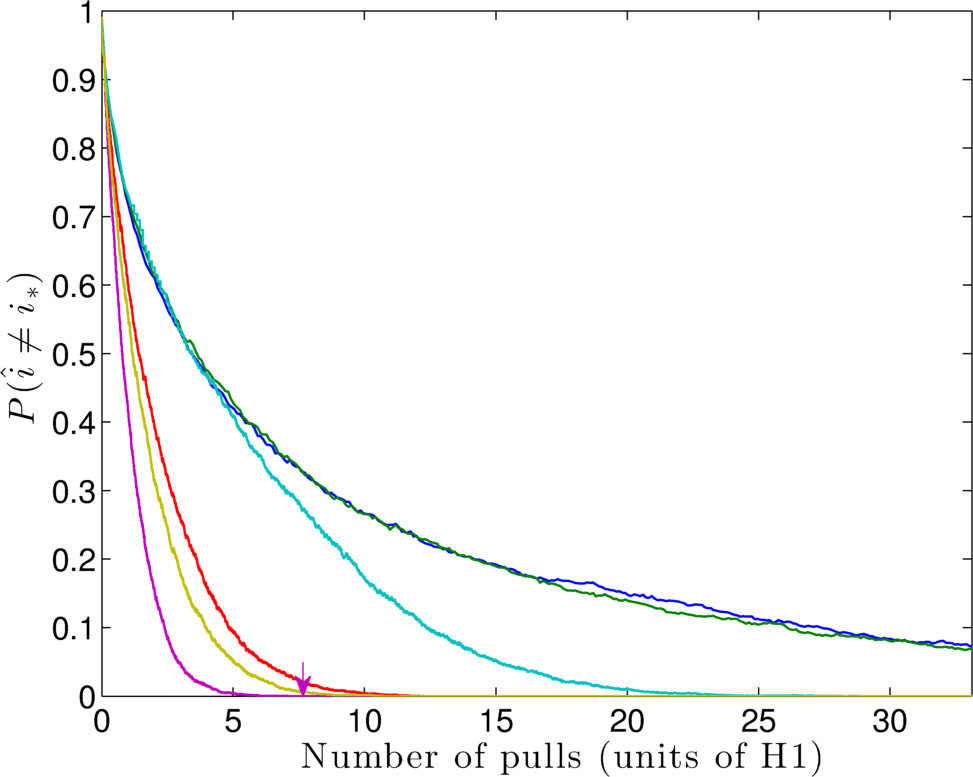}
        \end{subfigure}
        
        \begin{subfigure}[b]{0.01\textwidth}
                \begin{turn}{90}$n=1000$\end{turn}\vspace{.5in}
        \end{subfigure}
        ~
         \begin{subfigure}[b]{0.3\textwidth}
                \includegraphics[width=\textwidth]{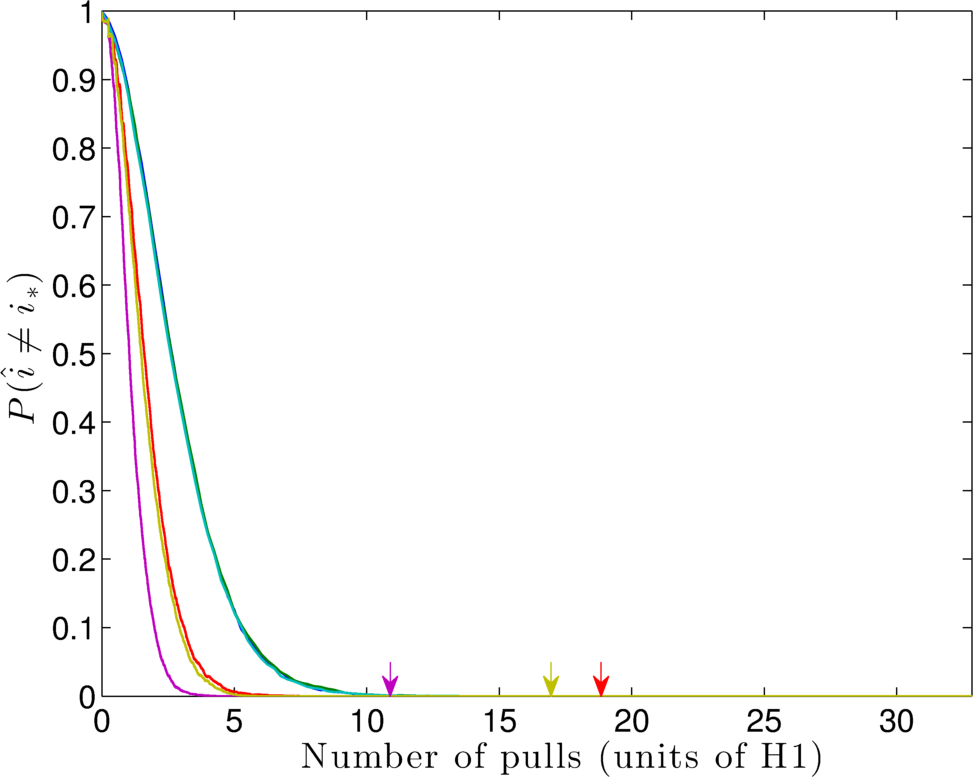}
        \end{subfigure}%
        ~ 
        \begin{subfigure}[b]{0.3\textwidth}
                \includegraphics[width=\textwidth]{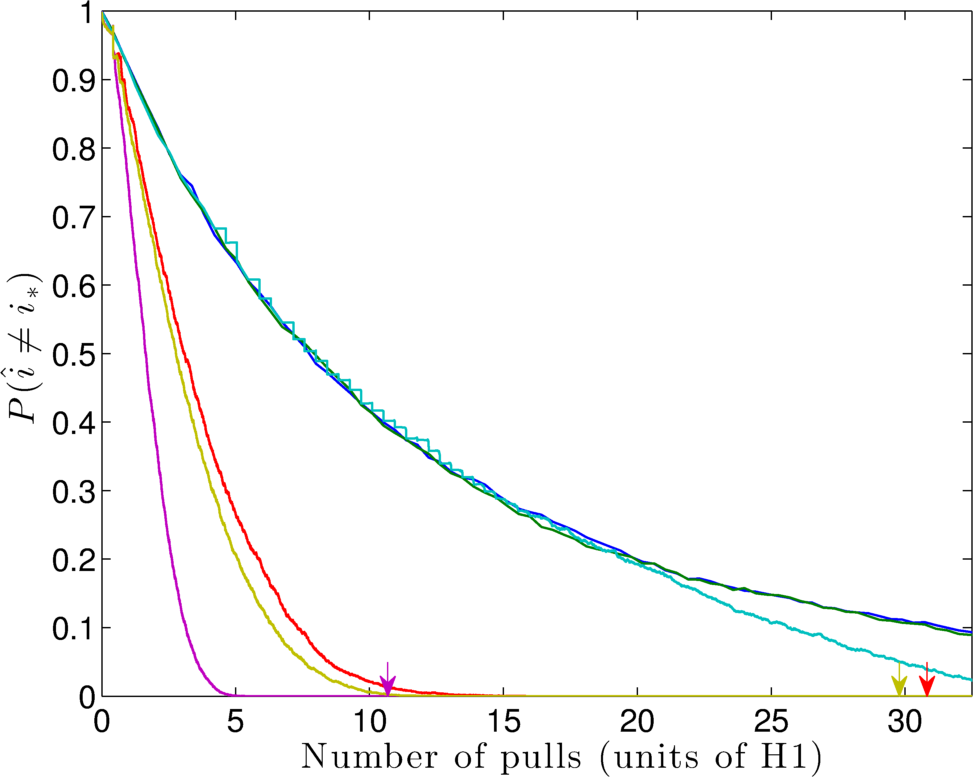}
        \end{subfigure}
        ~ 
        \begin{subfigure}[b]{0.3\textwidth}
                \includegraphics[width=\textwidth]{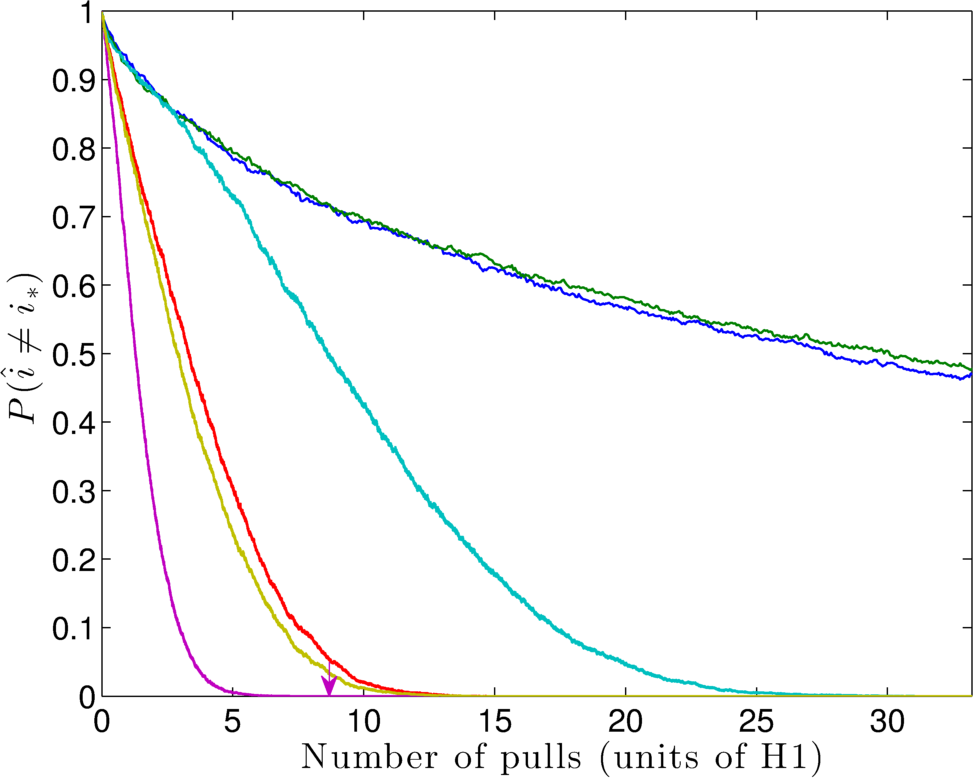}
        \end{subfigure}
        
        \begin{subfigure}[b]{0.01\textwidth}
                \begin{turn}{90}$n=10000$\end{turn}\vspace{.45in}
        \end{subfigure}
        ~
         \begin{subfigure}[b]{0.3\textwidth}
                \includegraphics[width=\textwidth]{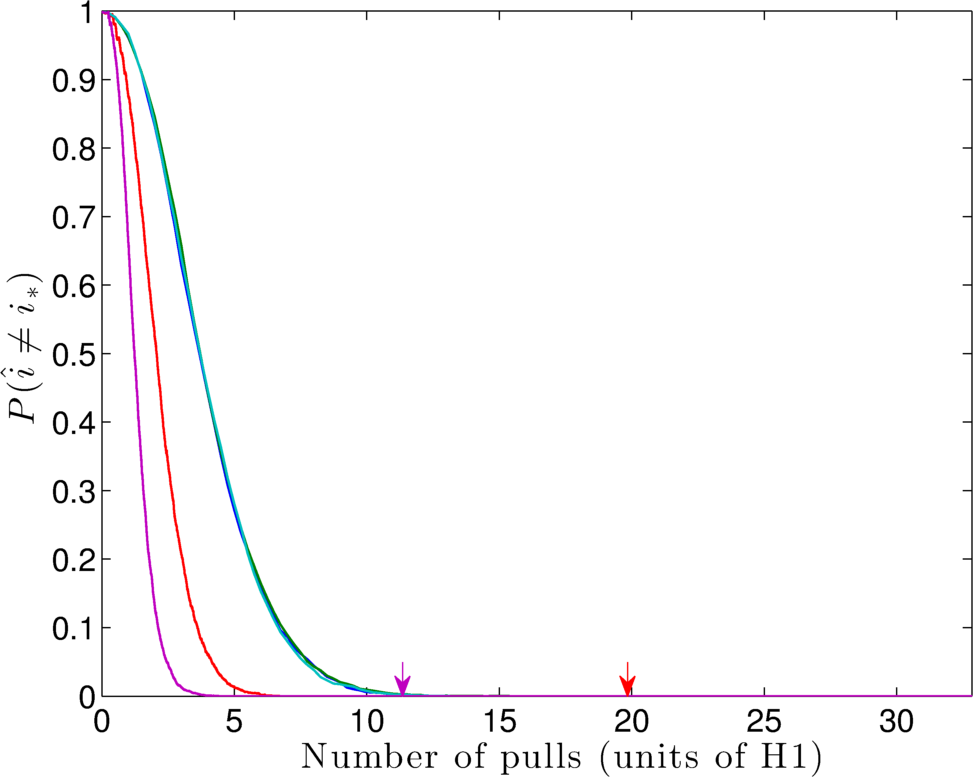}
        \end{subfigure}%
        ~ 
        \begin{subfigure}[b]{0.3\textwidth}
                \includegraphics[width=\textwidth]{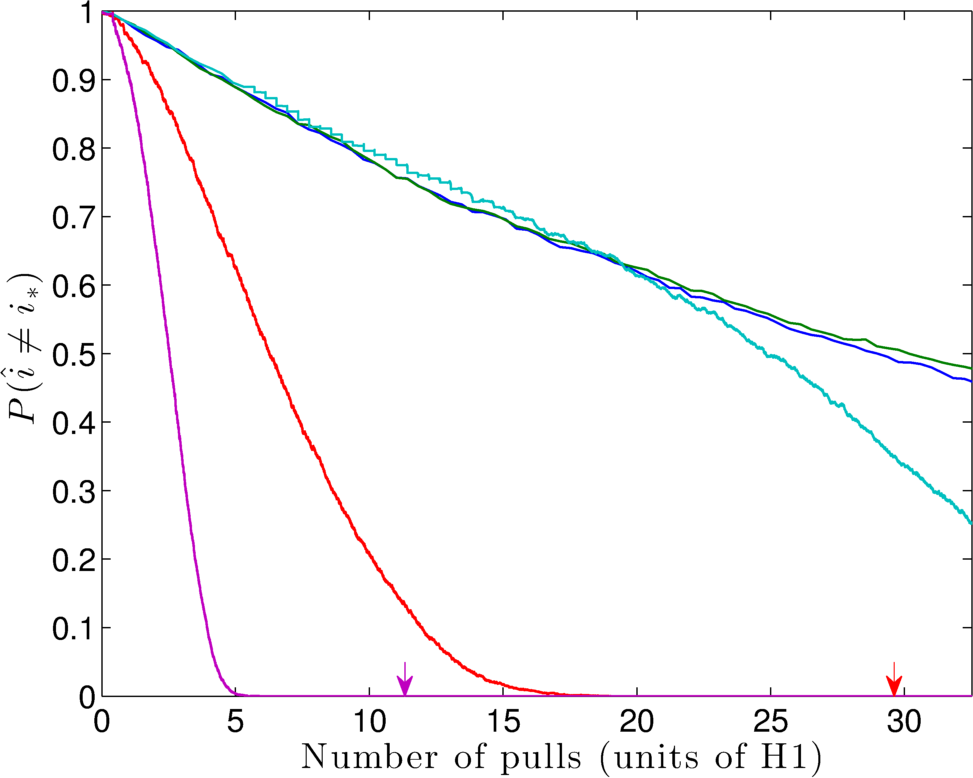}
        \end{subfigure}
        ~ 
        \begin{subfigure}[b]{0.3\textwidth}
                \includegraphics[width=\textwidth]{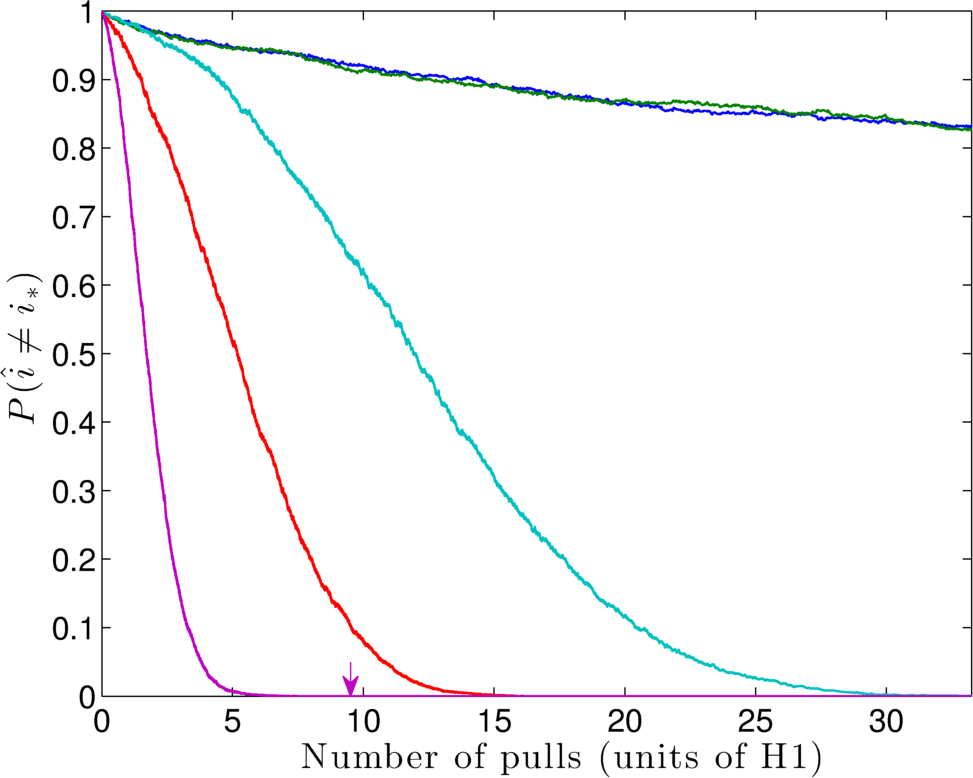}
        \end{subfigure}
        
        \caption{At every time, each algorithm outputs an arm $\hat{i}$ that has the highest empirical mean. The $\P(\hat{i} \neq i_*)$ is plotted with respect to the total number of pulls by the algorithm. The problem sizes (number of arms) increase from top to bottom. The problem scenarios from left to right are the 1-sparse problem ($\mu_1=0.5$, $\mu_i = 0 \ \forall i>1$) , $\alpha=0.3$ ($\mu_i = 1  - ({i}/{n})^{\alpha}$, $i=0,1,\dots,n$), and $\alpha=0.6$. The arrows indicate the stopping times (if not shown, those algorithms did not terminate within the time window shown). Note that UCB1 is not plotted for $n=10000$ due to computational constraints. Also note that in some plots it is difficult to distinguish between the nonadaptive sampling procedure, the exponential-gap algorithm, and successive elimination due to the curves being on top of each other. }
        \label{fig:anyTimePlot}
\end{figure}

\clearpage
\bibliographystyle{unsrt}
\bibliography{LilUCB}

\newpage
\appendix

\section{Condensed Proof of Lower Bound}  \label{app:LB}
In the following we show a weaker result than what is shown in \cite{Farrell}; nonetheless, it shows the $\log \log$ term is necessary.  
\begin{theorem} Let $X_i \overset{i.i.d.}{\sim} \mathcal{N}(\Delta,1)$, where $\Delta\neq 0$ is unknown.  Consider testing whether $\Delta>0$ or $\Delta <0$. Let $Y \in \{-1,1\}$ be the decision of any such test based on $T$ samples (possibly a random number).  If $\sup_{\Delta\neq 0}  \P(Y\neq \mathrm{sign}(\Delta) ) < 1/2$, 
then 
\begin{eqnarray} \nonumber
\limsup_{\Delta\rightarrow 0} \frac{\E[T]}{\Delta^{-2} \log \log {\Delta^{-2}} } &>& 0 \ .
\end{eqnarray}
\end{theorem}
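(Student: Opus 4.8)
The plan is to argue by contraposition: assume $\lim_{\Delta\to0}\E_\Delta[T]\,/\,(\Delta^{-2}\log\log\Delta^{-2})=0$, set $\delta:=\sup_{\Delta\neq0}\P_\Delta(Y\neq\mathrm{sign}(\Delta))$, and assume $\delta<1/2$ towards a contradiction; the flip $X_i\mapsto-X_i$ lets us treat $\pm\Delta$ symmetrically. A clean ingredient, needing only finiteness of $\E_\Delta[T]$ for small $\Delta$ and $\delta<1/2$, compares the test against the driftless law $\P_0$: writing $R_t^{(\pm\Delta)}=\exp(\pm\Delta S_t-\tfrac12\Delta^2 t)$ for $\left.\tfrac{d\P_{\pm\Delta}}{d\P_0}\right|_{\cF_t}$ with $S_t=\sum_{s\le t}X_s$, optional stopping gives $\P_{\pm\Delta}(Y=y)=\E_0[R_T^{(\pm\Delta)}\ds1\{Y=y\}]$, and Cauchy--Schwarz together with $R_t^{(\Delta)}R_t^{(-\Delta)}=e^{-\Delta^2 t}$ yields $\E_0\!\big[e^{-\Delta^2 T/2}\ds1\{T<\infty\}\big]\le 2\sqrt\delta$ for every $\Delta>0$; letting $\Delta\downarrow0$ and using monotone convergence gives $\P_0(T<\infty)\le 2\sqrt\delta$, so (at least when $\delta<1/4$) under pure noise the test usually never commits. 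This is the ``LIL is necessary'' phenomenon in disguise: a correct test cannot afford to decide on any fixed noise horizon.

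Next I would bring in the vanishing-expectation hypothesis. Fix a small $\Delta>0$ and a horizon $M=M(\Delta)$ with $M\gg\E_\Delta[T]$ (so $\P_{\pm\Delta}(T>M)\to0$ by Markov) yet $M=o(\Delta^{-2}\log\log\Delta^{-2})$; up to a vanishing error the test is then a rule that always halts by time $M$. The point is that $M$ is too short a horizon to decide the sign of gaps near $\Delta$: a drift $\Delta$ shifts the running mean $S_t/t$ by only $\Delta$, whereas the zero-drift LIL fluctuations of $S_t/t$ reach order $\sqrt{(2\log\log t)/t}$, and these match precisely at $t\asymp\Delta^{-2}\log\log\Delta^{-2}$. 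So a rule forced to decide by time $M$ must, on a non-negligible fraction of noise paths, be unable to tell a small gap from its sign-flip; combined with the fact that under $\P_0$ it usually never halts, this should be driven to a contradiction with $\delta<1/2$ by a change-of-measure between $\P_\Delta$, $\P_{-\Delta}$ and $\P_0$ performed \emph{scale by scale}, mirroring the geometric-grid union bound behind Lemma~\ref{lem:lil}.

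The main obstacle is exactly this last step. A single two-point comparison between $\P_\Delta$ and $\P_{-\Delta}$, or a single use of Cauchy--Schwarz as above, only recovers the Wald-type bound $\E_\Delta[T]\gtrsim\Delta^{-2}\log(1/\delta)$ and \emph{cannot} produce the extra $\log\log$ factor, which is a genuinely multi-scale (``union over dyadic horizons'') phenomenon---the reason the LIL rather than the CLT is the right tool. One must therefore control the martingale $\exp(\Delta S_t-\tfrac12\Delta^2 t)$ along a geometric grid of horizons up to $M$ and at the random stopping time simultaneously, handle the range $\delta\in[1/4,1/2)$ (e.g.\ by pivoting through a small nonzero drift rather than through $\P_0$), and deal with the procedure possibly never halting under $\P_0$ (an event that is $\P_0$-nontrivial but $\P_\Delta$-null). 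Because the target is only $\limsup>0$, far weaker than Farrell's $\ge2-4\delta$, all constants, probability thresholds and the precise choice of $M$ can be taken crudely, which is what makes the ``short argument'' promised in the text feasible.
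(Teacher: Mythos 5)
Your proposal contains a genuine gap: the step that actually produces the $\log\log$ factor is never carried out. The preliminary computation is fine --- the likelihood-ratio identity $\P_{\pm\Delta}(Y=y)=\E_0[R_T^{(\pm\Delta)}\ds1\{Y=y\}]$ plus Cauchy--Schwarz does give $\E_0[e^{-\Delta^2T/2}\ds1\{T<\infty\}]\le 2\sqrt{\delta}$ and hence $\P_0(T<\infty)\le 2\sqrt{\delta}$ --- but, as you yourself note, this kind of two-point comparison only yields the Wald-type bound $\E_\Delta[T]\gtrsim\Delta^{-2}\log(1/\delta)$. The entire content of the theorem is the passage from ``$M=o(\Delta^{-2}\log\log\Delta^{-2})$'' to ``a test that halts by time $M$ must err with probability near $1/2$,'' and your writeup names this as ``the main obstacle,'' describes in general terms a multi-scale change of measure over a geometric grid of horizons, and stops. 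A description of what a proof would have to accomplish, together with an accurate diagnosis of why the naive argument fails, is not a proof; in particular you never exhibit the family of alternatives (or the scale-by-scale event decomposition) against which the fixed-horizon rule is shown to fail, nor do you resolve the issues you flag yourself (the range $\delta\in[1/4,1/2)$, and the $\P_0$-nontrivial event $\{T=\infty\}$).

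For comparison, the paper takes a different and much shorter route precisely to avoid the difficulty you ran into: it invokes Farrell's structural result (Fact~1) that an optimal test may be taken to be a generalized SPRT with non-decreasing thresholds $B_t$, so that only the threshold sequence needs to be analyzed. Fact~2 (the LIL plus continuity in $\Delta$) then forces $B_t\ge(1+\epsilon)\sqrt{2t\log\log t}$ for large $t$ on pain of worst-case error exactly $1/2$, and for such thresholds a direct computation shows the drifted walk stays inside $\pm B_t$ up to time $t_0(\Delta)=\tfrac{\epsilon^2}{2}\Delta^{-2}\log\log(\Delta^{-2}/2)$ with probability bounded away from zero, whence Markov's inequality gives the claim. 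The price of that route is accepting the reduction to GSPRTs (justified in \cite{Farrell}); the price of yours is having to prove an LIL-type impossibility statement for \emph{arbitrary} stopping rules, which is exactly the part you left open. If you want to salvage your approach, the missing ingredient is a quantitative lower bound of the form: for any rule with $\P_{\pm\Delta}(T>M)$ small and $M=o(\Delta^{-2}\log\log\Delta^{-2})$, there exists $\Delta'$ with $|\Delta'|\le\Delta$ on which the error exceeds $1/2-o(1)$; establishing this without the GSPRT reduction requires the multi-scale argument in full, not in outline.
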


We rely on two intuitive facts, each which justified more formally in \cite{Farrell}.

\begin{description}
\item[Fact 1.] The form of \emph{an} optimal test is a \emph{generalized sequential probability ratio test} (GSPRT), which continues sampling while 
\begin{eqnarray} \nonumber
-B_t \leq \sum_{j=1}^t X_i \leq B_t
\end{eqnarray}
and stops otherwise, declaring $\Delta > 0$ if $\sum_{j=1}^t X_j \geq B_t$, and  $\Delta < 0$ if $\sum_{j=1}^t X_j \leq -B_t$ where $B_t>0$ is non-decreasing in $t$.  This is made formal in \cite{Farrell}.
\item[Fact 2.] 
\label{lem:LIL}
If
\begin{eqnarray} \label{eqn:threshh}
\lim_{t \rightarrow \infty} \frac{B_t}{\sqrt{2 t \log \log t}} \leq 1
\end{eqnarray}
then $Y$, the decision output by the GSPRT, satisfies $\sup_{\Delta \neq 0} \P_{\Delta}( Y\neq \mathrm{sign}{\ \Delta})  = 1/2$.   
This follows from the LIL and a continuity argument (and note the limit exists as $B_t$ is non-decreasing). Intuitively, if the thresholds satisfy (\ref{eqn:threshh}), a zero mean random walk will eventually hit either the upper or lower threshold. The upper threshold is crossed first with probability one half, as is the lower.  By arguing that the error probabilities are continuous functions of $\Delta$, one concludes this assertion is true.
\end{description}

The argument proceeds as follows.  If (\ref{eqn:threshh}) is holds, then the error probability is $1/2$.  So we can focus on threshold sequences satisfying $\lim_{t \rightarrow \infty} \frac{B_t}{\sqrt{2 t \log \log t}} \geq (1+\epsilon)$ for some $\epsilon >0$. In other words, for all $t > t_1$ some $\epsilon > 0$, some sufficiently large $t_1$
\begin{eqnarray*} \label{eqn:asm}
B_t  \geq (1+ \epsilon) \sqrt{2 t \log \log t}.
\end{eqnarray*}
Define the function 
\begin{eqnarray*}
t_0(\Delta) = \frac{\epsilon^2 \Delta^{-2}}{2} \  \log \log\left(\frac{\Delta^{-2}}{2}\right)
\end{eqnarray*} 
and let $T$ be the stopping time:
\begin{eqnarray*} \nonumber
T := \inf \left \{t \in \mathbb{N} :  \left \vert \sum_{i=1}^t X_i \right \vert \geq B_t \right \}.
\end{eqnarray*}    
Let $S_t^{(\Delta)} = \sum_{j=1}^t X_j$ for $X_j \overset{iid}{\sim} \mathcal{N}(\Delta,1)$.  Without loss of generality, assume $\Delta >0$. Additionally, suppose $\Delta$ is sufficiently small, such that both $t_0(\Delta) > t_1(\epsilon)$ and $\Delta \leq \epsilon$ (in the following steps we consider the limit as $\Delta \rightarrow 0$).   We have
\begin{eqnarray} \nonumber
\P_{\Delta}  (T \geq t_0(\Delta) ) \hspace{-2.5cm} && \\ \nonumber
 &=&   \P \left( \bigcap_{t=1}^{t_0(\Delta) -1} |S_t^{(\Delta) }| < B_t  \right)  \\ \nonumber
&=&  \  \P \left( \bigcap_{t=1}^{t_1(\epsilon)} \{|S_t^{(\Delta)}| < B_t \} \cap \bigcap_{t=t_1(\epsilon)+1}^{t_0(\Delta) - 1} \{ S_t^{(0)}  < B_t -\Delta t \} \cap \{ S_t^{(0)}  > - B_t -\Delta t \}   \right) \\  
& \geq &  \  \P \left( \bigcap_{t=1}^{t_1(\epsilon)} \{|S_t^{(\Delta)}| < B_t \}  \cap \bigcap_{t=t_1(\epsilon)+1}^{t_0(\Delta) -1} \{ | S_t^{(0)} | < (1+ \epsilon/2) \sqrt{2t \log \log t} \} \right)  \label{eqn:lasts} \\ \nonumber
&=&  \P \left( \bigcap_{t=1}^{t_1(\epsilon)}  |S_t^{(\Delta)} | < B_t  \right) \P \left(  \bigcap_{t=t_1(\epsilon)+1}^{t_0(\Delta) -1} |S_t^{(0)} | \leq (1+ \epsilon/2) \sqrt{2t \log \log t}   \left \vert \bigcap_{t=1}^{t_1(\epsilon)}  |S_t^{(0)} | < B_t \right. \right) \\ \label{eqn:lasts2}
 &\geq&  \P \left( \bigcap_{t=1}^{t_1(\epsilon)} |S_t^{(\Delta)} | < B_t  \right) \P \left(  \bigcap_{t=t_1(\epsilon)+1}^{\infty} |S_t^{(0)} | < (1+ \epsilon/2) \sqrt{2t \log \log t}    \right)
\end{eqnarray}
where (\ref{eqn:lasts}) holds when $\epsilon \geq \Delta$ and (\ref{eqn:lasts2}) holds by removing the conditioning, and then by increasing the number of terms in the intersection.  To see that (\ref{eqn:lasts}) holds, note that  $\frac{2\log \log t}{t} \geq \left( \frac{2\Delta}{\epsilon}\right)^2$ for all $t \leq t_0(\Delta)$, which is easily verified when $ \epsilon \geq \Delta$ since
\begin{eqnarray*}  
\frac{ \log \log \left( \frac{\epsilon^2 \Delta^{-2}}{2} \  \log \log\left(\frac{\Delta^{-2}}{2}\right) \right)}{ \  \log \log\left(\frac{\Delta^{-2}}{2}\right)} &\geq& 1.
\end{eqnarray*}
Taking the limit as $\Delta \rightarrow 0$, for any $\epsilon >0$, gives
\begin{eqnarray*}
\lim_{\Delta \rightarrow 0} \P_{\Delta}  (T \geq t_0(\Delta) )  & \geq & c(\epsilon) >0
\end{eqnarray*}
which follows from (\ref{eqn:lasts2}), as the first term is non-zero  for any $\Delta$ (including $\Delta = 0$) since $t_1(\epsilon)< \infty$ and $B_t>0$, and the second term is non-zero by the LIL  for any $\epsilon >0$.  Note that a finite bound on the second term can be obtained as in Section \ref{th:lilucb}.

 By Markov,  $ \E_{\Delta}[T] / t_0(\Delta) \geq  \P_{\Delta} (T\geq t_0(\Delta)  ) $, and we conclude
\begin{eqnarray*}
\lim_{\Delta \rightarrow 0} \frac{\E_{\Delta}[T]}{\Delta^{-2} \log \log \Delta^{-2}} \geq \epsilon^2 \  c(\epsilon)  > 0
\end{eqnarray*}
for any test with $\sup_{\Delta \neq 0} \P(Y \neq \mbox{sign}(\Delta)) < 1/2$.

\end{document}